\begin{document}

\title{Lost in Translation: Modern Neural Networks Still Struggle With Small Realistic Image Transformations}

\titlerunning{Modern NNs Struggle With Small Realistic Image Transformations}

\author{Ofir Shifman \and Yair Weiss
}


\authorrunning{Shifman and Weiss}

\institute{The Hebrew University of Jerusalem \\
\email{\{Ofir.Shifman, Yair.Weiss\}@mail.huji.ac.il}}

\maketitle
\begin{abstract}
Deep neural networks that achieve remarkable performance in image classification have previously been shown to be easily fooled by tiny transformations such as a one pixel translation of the input image. In order to address this problem, two approaches have been proposed in recent years. The first approach suggests using huge datasets together with data augmentation in the hope that a highly varied training set will teach the network to learn to be invariant. The second approach suggests using architectural modifications based on sampling theory to deal explicitly with image translations.  In this paper, we show that these approaches still fall short in robustly handling 'natural' image translations that simulate a subtle change in camera orientation. Our findings reveal that a mere one-pixel translation can result in a significant change in the predicted image representation for approximately 40\% of the test images in state-of-the-art models (e.g. open-CLIP trained on LAION-2B or DINO-v2) , while models that are explicitly constructed to be robust to cyclic translations can still be fooled  with 1 pixel realistic (non-cyclic) translations 11\% of the time. We present \textbf{R}obust \textbf{I}nference by \textbf{C}rop \textbf{S}election: a simple method that can be proven to achieve any desired level of consistency, although with a modest tradeoff with the model's accuracy. Importantly, we demonstrate how employing this method reduces the ability to fool state-of-the-art models with a 1 pixel translation to less than 5\%  while suffering from only a 1\% drop in classification accuracy.  Additionally, we show that our method can be easy adjusted to deal with circular shifts as well. In such case we achieve 100\% robustness to integer shifts with \textit{state-of-the-art} accuracy, and with no need for any further training.

\keywords{Robustness \and Translation Invariance \and Neural Networks}

\end{abstract}    
\section{Introduction}
\label{sec:intro}

\begin{figure}[t]
  \centering
  \begin{subfigure}{1\linewidth}  
    \fbox{\includegraphics[width=0.95\linewidth]{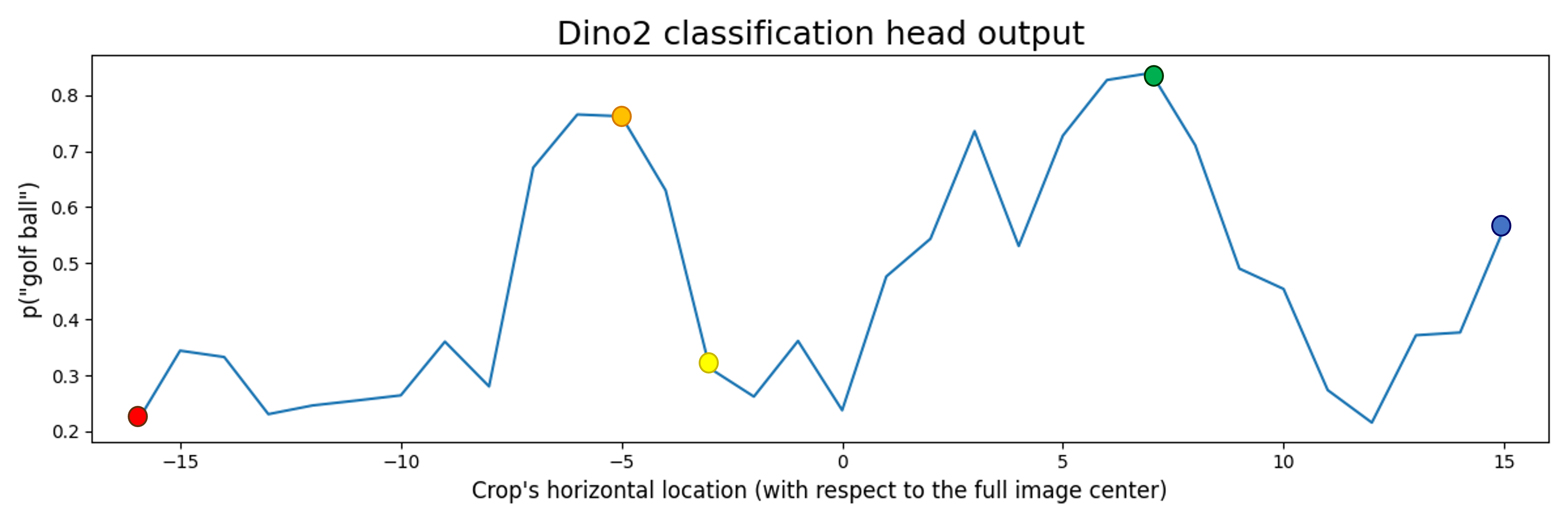}}
    \caption{DINOv2's output score for the true label ("golf ball") as a function of the crop location.}
    \label{fig:p_golf_ball}
  \end{subfigure}
  \hfill
  \begin{subfigure}{1\linewidth}
    \fbox{\includegraphics[width=0.95\linewidth]{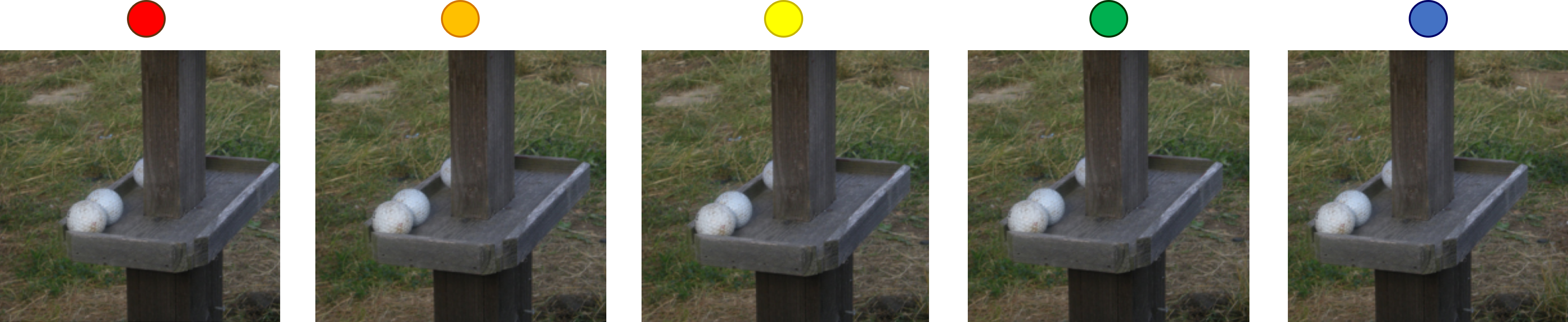}}
    \caption{Five Examples of different crops, the colored circles connect the image to it's predicted probability}
    \label{fig:golf_ball_images}
  \end{subfigure}
  \caption{The modern DNN DINOv2 \cite{oquab2023dinov2} still suffers from significant changes in the output probability for the true label ("golf ball") as a function of minor realistic translations. (a) shows DINOV2's probability for "golf ball" for 32 crops, each measuring 224x224 pixels, extracted from the same 256x256 pixels image  with lateral translation only. (b) Demonstrates that this translation by a few pixels translation is nearly imperceptible.   We limit our experiments to evaluation set in which the true label object remains fully observable within all the assessed crops, as can be seen here.}
  \label{fig:golf_ball}
\end{figure}

\begin{figure}[htb]
  \centering
  \begin{subfigure}{0.58\linewidth}
    \centering
    \includegraphics[width=\linewidth]{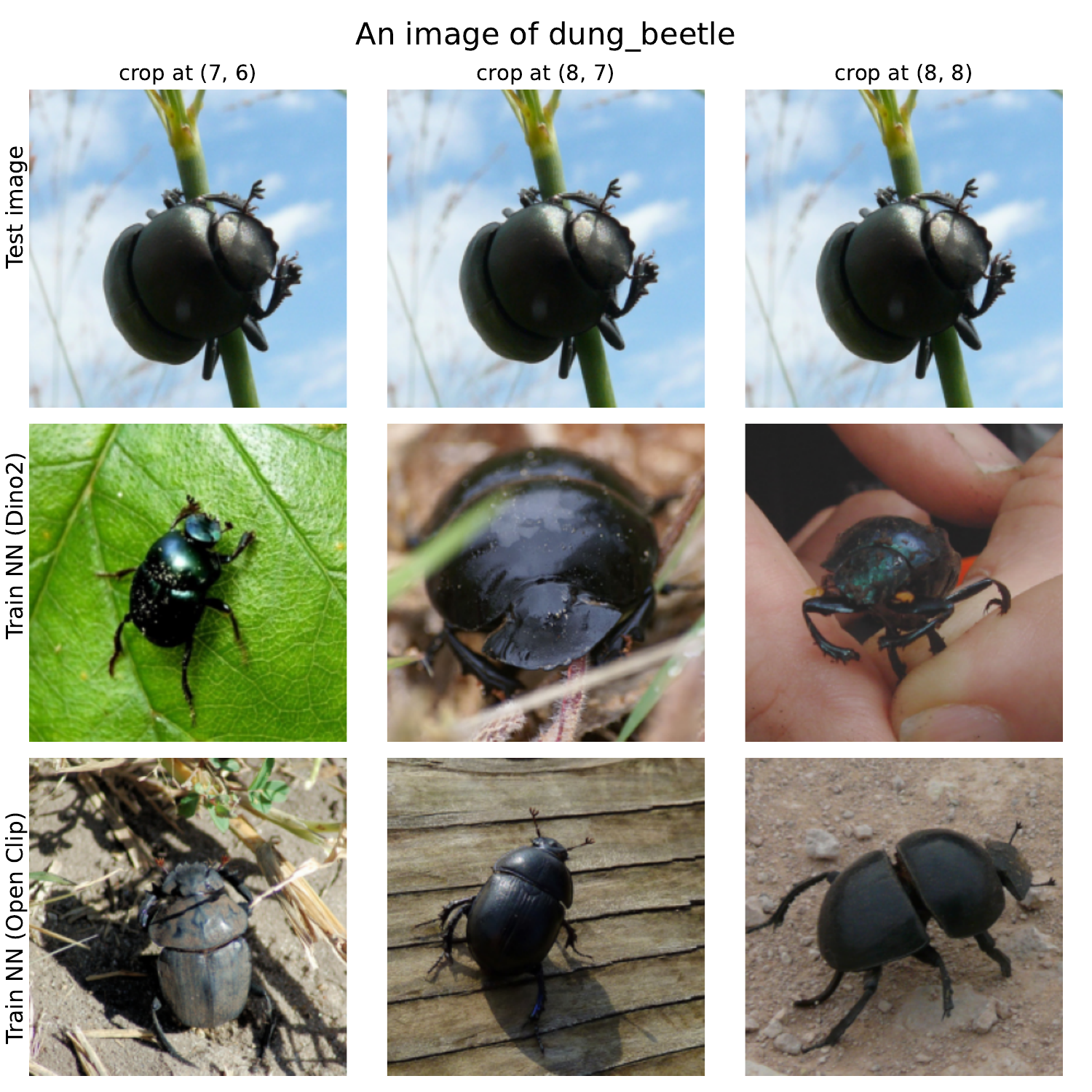}
    \caption{\textbf{First row:} 3 different images cropped out from the same image and differ from each other by 1 pixel shifts. \textbf{Second row:} The retrieved images that are closest in terms of DINOv2's representation to the query image. The search is performed over all 1300 images in the same class.  \textbf{Third row:} The nearest neighbor of the top images using Open Clip representation. }
    \label{subfig:beetles}
  \end{subfigure}\hfill
  \begin{subfigure}{0.38\linewidth}
    \centering
    \includegraphics[width=\linewidth]{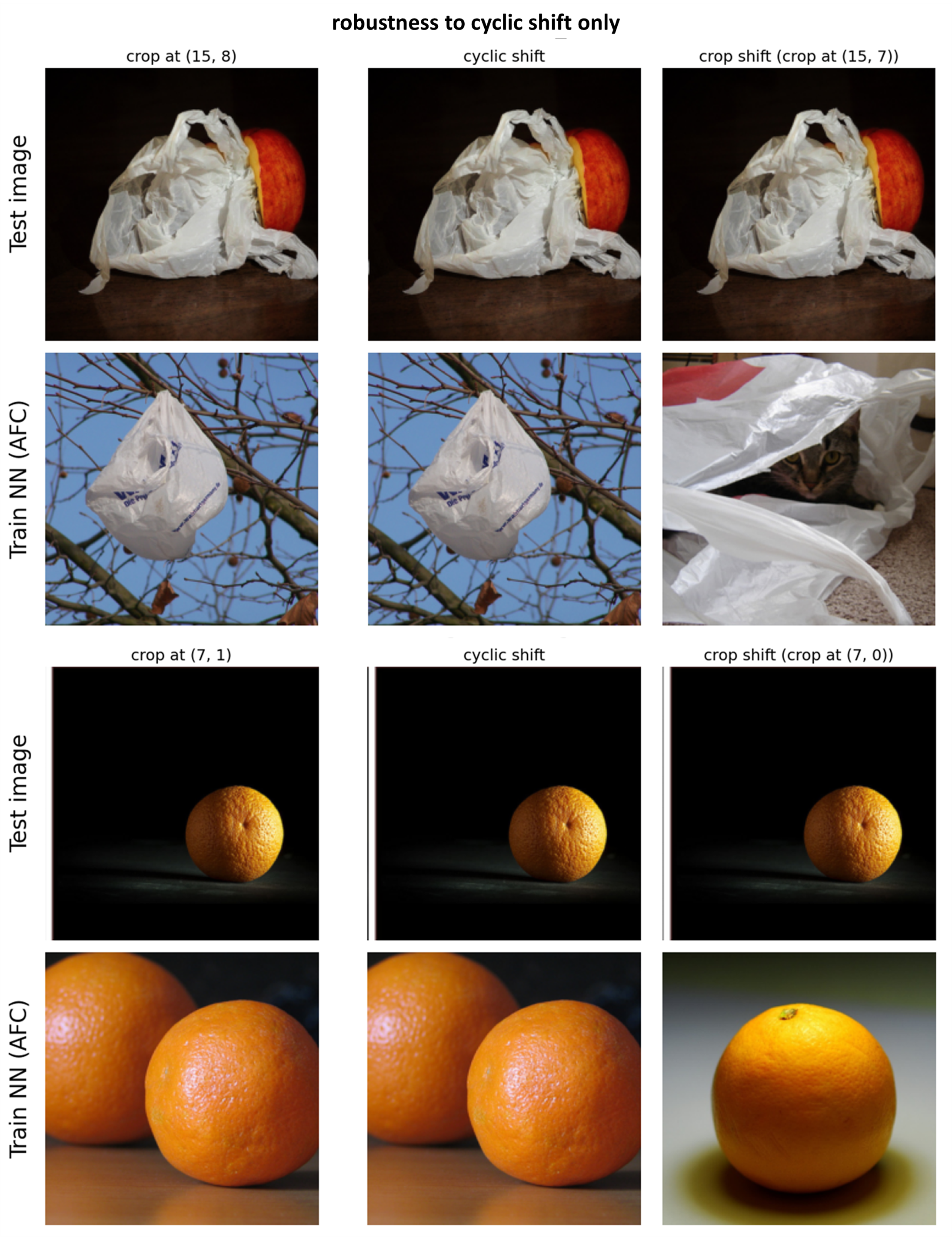}
    \caption{\textbf{First \& Third rows (left to right)}: A test image, the image translated with cyclic shift, and with a realistic shift. One might have hard time seeing the difference since in both cases the shift is by  a single pixel. \textbf{Second \& Forth rows:} The images from the training set whose AFC representation is the nearest neighbor of the top images' representations. }
    \label{subfig:afc}
  \end{subfigure}
  \caption{Modern Neural Networks still suffer from inconsistency to minor translations: (a) shows an example in which both DINO-V2 and Open-CLIP fail to be consistent on these indiscernible translations. While (b) shows that a method that achieves 100\% consistency to cyclic shifts\cite{michaeli2023aliasfree} still suffers from lack of robustness to imperceptible realistic shifts.}
  \label{fig:modern_fails}
\end{figure}

Neural networks have shown remarkable performance in  image classification and have even been described as achieving "super-human" performance \cite{he2015delving}. One reason for the success of Convolutional Neural Networks (CNNs) is the inductive bias that arises from the use of convolutions and pooling~\cite{fukushima1983neocognitron,lecun1989backpropagation} which should in principle make them robust to small perturbations of the image and mainly to image-translations.

Despite their reliance on convolutions, in the last five years it has been shown repeatedly that state-of-the-art deep CNNs are {\em not} invariant to tiny translations of the input \cite{azulay2018deep,zhang2019making}, and can often change their output as a result of a simple translation of the image by a single pixel. Importantly, recent works suggest that vision transformers  also exhibit  vulnerability to such transformations\cite{gunasekar2022generalization, rojas2023making, wu2021cvt}. These results add to the growing literature on the lack of robustness of deep neural networks to small changes in the input or to test situations that are different from the training scenario~\cite{hendrycks2019benchmarking,hendrycks2016baseline,recht2019imagenet,geirhos2020shortcut,wortsman2022robust}.

In order to make modern classifiers robust to tiny perturbations, two types of solutions have been proposed. 
The first approach suggests using huge datasets possibly in combination with self-supervised pretraining and data augmentation assuming that the use of a large and varied dataset will cause the classifier to learn to be robust\cite{gunasekar2022generalization, radford2021learning, schuhmann2022laion, rebuffi2021data}. For example,  Gunasekar \cite{gunasekar2022generalization} used data augmentation to increase robustness to translations and argued that "In ultra-large scale datasets,
accuracy/robustness might naturally come from dataset size itself rather than model priors."
This approach was also advocated in Radford et al. ~\cite{radford2021learning} which argued that "large-scale task and dataset agnostic pre-training combined with a reorientation towards zero-shot and fewshot benchmarking on broad evaluation suites (as advocated by Yogatama et al. ~\cite{yogatama2019learning} and Linzen ~\cite{linzen2020can}) promotes the development of more robust systems"~\cite{radford2021learning}.  
This "robustness by large-scale" approach often goes together with the use of the popular vision-transformer (ViT) architecture \cite{dosovitskiy2020image,caron2021emerging,naseer2021intriguing}, which is usually pre-trained on a massive dataset of up to five billion images \cite{schuhmann2022laion, oquab2023dinov2}. 

A second approach uses sampling theory to modify CNNs so that they will be invariant to translation. In particular, this approach is based on the observation that standard  architectures may violate the Shannon-Nyquist sampling theorem~\cite{zhang2019making, michaeli2023aliasfree,zou2023delving}. Several recent papers~\cite{chaman2021truly,michaeli2023aliasfree,rojas2022learnable,rahman2023truly} have shown how to modify CNN architecture so that the network is always robust to any {\em circular (i.e. cyclic)} translation of the input: when the image is translated to the right, the rightmost column reappears as the leftmost column in the translated image. Similar ideas have also been extended to  ViT architectures and have made it possible to make them robust to {\em circular} translation as well \cite{rojas2023making}.


In this paper, we show that both of these existing approaches to providing robustness still fall short of attaining translation invariance to simple, realistic perturbations. When a camera is rotated by a small angle along its optical center, the image is indeed shifted by a small translation. But this translation is {\em not} cyclic. Rather a small part of the scene exits the field of view in one side and another part of the scene enters the field of view in the other side. In image-processing terms, this corresponds to taking a large field of view image, and performing different crops that are slightly smaller. The bottom of figure~\ref{fig:golf_ball_images} illustrates what we call "realistic" image translations: each image shown on the bottom is a slightly different crop of a larger image. All the images are translations of each other, but they are not circular translations. Such translations have also been referred to as "crop-shifts"~\cite{michaeli2023aliasfree} or "standard-shift" \cite{rojas2022learnable, rojas2023making}. 

How well do the suggested approaches to providing robustness work for these kind of realistic translations? As can be shown in \cref{fig:golf_ball,fig:modern_fails} they still fall short of providing translation invariance even for tiny translations. Figure~\ref{fig:p_golf_ball} shows that the output of DINOv2 \cite{oquab2023dinov2} which was trained on billions of images in a self-supervised manner, still changes drastically when the image is shifted by a few pixels. ~\cref{subfig:beetles} shows a similar effect for both DINOv2 and OpenCLIP, and \cref{subfig:afc} shows a similar failure with AFC ~\cite{michaeli2023aliasfree},  a recent method which was designed to be invariant to cyclic shifts, but also shows better robustness to "realistic" or "crop-shift" translations, than any other previous works.

Modern models often serve as foundational models for diverse tasks \cite{bommasani2021opportunities, awais2023foundational, yuan2021florence}. Given that downstream tasks rely on these models' computed representations, maintaining consistency of the representation becomes vital. As can be seen in ~\cref{fig:modern_fails}, we evaluate the robustness of the feature representation by using it to retrieve nearest neighbors from a set of 1300 images from the same class. For the AFC method, when the images are translated by a single pixel using {\em cyclic} shift, the networks output stays the same as the retrieved neighbor does not change. However when it is translated using a {\em realistic} one-pixel translation (i.e. the first column of pixels disappears in one side and a new column of pixels is introduced in the other side), then the retrieved image changes drastically even though the images are perceptually identical.




While these results are meant to illustrate individual failures of existing approaches, we present here quantitative experiments that show that these failures are surprisingly common. 
We also present a simple method that can be proven to achieve any desired level of consistency, although with a modest tradeoff with the model's accuracy. Importantly, we demonstrate how employing this method reduces the ability to fool state-of-the-art classifiers  to less than 5\% while suffering from approximately a 1\% drop in classification accuracy. Additionally, we show that our method can be easy adjusted to deal with circular shifts as well. In such case we achieve 100\% robustness to integer shifts, as was previously reported in~\cite{chaman2021truly, michaeli2023aliasfree, rojas2022learnable, rojas2023making}, but with \textit{state-of-the-art} accuracy, and with no need for any training at all.

\section{Defining and Measuring Robustness to Translations}

We use two measures to quantify the robustness of a classifier to a set of transformations. The first one is referred to as "Consistency"  \cite{azulay2018deep, chaman2021truly, michaeli2023aliasfree} and measures the probability that a random shift in the set of allowed translations (e.g. a 1-pixel translation) will change the output of the classifier. The second one, "Adversarial Robustness" was also used by Michaeli et al. \cite{michaeli2023aliasfree} and asks whether an adversary which is only allowed to replace an image with a translation in the set of allowed translations (e.g. 1 pixel shifts) will be able to make the classifier change its output. 

Mathematically, define $T(I,\Delta)$ as the translation of image $I$ by offset $\Delta$ and $f_\theta(I)$ is the output of the classifier, then consistency is defined as 

\begin{equation}
\mathbb{E}_{I,\Delta}\left[f_\theta(I)==f_\theta(T(I,\Delta))\right]
\label{eq:consistency}
\end{equation}
where the expectation is taken over images and translations $\Delta$. Adversarial robustness if defined as 
\begin{equation}
\mathbb{E}_I \left[ \min_{\Delta} 
f_\theta(I)==f_\theta(T(I,\Delta))\right]
\label{eq:adv-rob}
\end{equation}

Here the expectation is taken over images, but the translation $\Delta$ is chosen by the adversary in order to fool the classifier rather than randomly. By definition, adversarial robustness is lower than consistency. 

We evaluated robustness and consistency using two different definitions of the classifier's output $f_\theta(\cdot)$. In the standard ImageNet setting, the output of the classifier is one of 1000 labels and a model is robust if this label does not change with small translations. The problem with this definition is that it measures the robustness of the final classification and not the robustness of the underlying representation computed by the network, which can be used as input for various tasks when working with DNN as a foundation model \cite{bommasani2021opportunities,awais2023foundational,fang2023eva}. Note that if the network's representation is provably robust, then we will always know that the classification will be robust. However, a classifier that is robust on one task, is not guaranteed to calculate a robust representation of an arbitrary image, especially when dealing with data that is {\em out of distribution} \cite{hendrycks2021many}.

Therefore, in our second definition we focus on the representation computed by the network, which is typically a vector in dimension between 512 and 2048.
 A natural approach would be to check if this embedding vector does not change when the image is translated by a single pixel (or changes by a negligible amount). In order to define what constitutes a negligible change in the embedding vector,  we use the vectors to retrieve the most similar image in a dataset of 1300 images (all other images in the same class). Now the robustness measures whether the index of the retrieved image is the same for an original image and a translated image. 

As mentioned in the introduction, we use a definition of translation that is different from the cyclic translations used in many previous papers and captures the effect of a small natural rotation of the camera around its optical center. We believe this definition is more realistic and relevant to most applications. For evaluation we follow the standard protocol in training classifiers on ImageNet: images are resized to size $256 \times 256$ and then a crop of size $224 \times 224$ is taken. An image is translated one pixel to the right by moving the center of the crop one pixel to the right. This means that the column of pixels at the left hand side of the image disappears from the image, while a new column of pixels appears in the right hand side. All the other columns are shifted to the left.

The top part of ~\cref{tab:results} shows the evaluation of different definitions of robustness for different baseline models and different shift sizes. open-CLIP and DINOv2 are representatives of the "robustness by large scale" approach while AFC serves as the representative of the signal processing approach that is guaranteed to be invariant to cyclic translations. As can be seen, all methods are still far from robust {\em even for one pixel translations} and the best model in terms of accuracy (DINOv2) has a 1-NN consistency of less than 85\% and an adversarial robustness less than 63\% to 1-pixels shifts.

\begin{table}[htbp]
  \centering
  \resizebox{1.05\textwidth}{!}{
    \begin{tabular}{@{}l|c|cccc|cccc|cccc|cccc@{}}
    \toprule
    \multicolumn{1}{c}{Model} & \multicolumn{1}{c}{Accuracy} & \multicolumn{4}{c}{\makecell[c]{Adv-Rob (1-NN)\\{\footnotesize Shift Size:}}} & \multicolumn{4}{c}{\makecell[c]{Adv-Rob (Class)\\{\footnotesize Shift Size:}}} & \multicolumn{4}{c}{\makecell[c]{Consistency (1-NN)\\{\footnotesize Shift Size:}}} & \multicolumn{4}{c}{\makecell[c]{Consistency (Class)\\{\footnotesize Shift Size:}}} \\
     \cmidrule(lr){2-2} \cmidrule(lr){3-6} \cmidrule(lr){7-10} \cmidrule(lr){11-14} \cmidrule(lr){15-18} 
    & & 1  & 3 & 5 & 9 & 1  & 3 & 5 & 9 & 1  & 3 & 5 & 9 & 1  & 3 & 5 & 9 \\
    \midrule
vanila vit   &   76.25*	& 37.21 & 	14.52 & 	8.93 & 	6.40 & 	- & 	- & 	- & 	- & 	71.02 & 	61.90 & 	56.95 & 	52.43 &    - & -&- & -\\
Resnet50   &   76.74 & 	46.33 & 	25.08 & 	17.44 & 	10.39 & 	86.45 & 	78.16 & 	74.08 & 	69.78 & 	75.96 & 	70.23 & 	67.24 & 	63.44 & 	94.50 & 	93.13 & 	92.42 & 	91.60 \\
ConvNext-T   &   78.29 & 	59.16 & 	37.53 & 	29.07 & 	21.20 & 	89.79 & 	82.37 & 	78.77 & 	75.18 & 	83.09 & 	77.54 & 	74.29 & 	70.86 & 	96.10 & 	94.64 & 	93.72 & 	92.79 \\
ConvNext-L   &   \textbf{83.47} & 	\textbf{65.58} & 	45.70 & 	37.35 & 	29.13 & 	94.35 & 	89.82 & 	87.62 & 	85.65 & 	\textbf{86.33} & 	82.15 & 	79.75 & 	77.38 & 	97.92 & 	97.19 & 	96.76 & 	96.33 \\
open-CLIP   &   69.33* & 	57.01 & 	35.28 & 	27.81 & 	22.22 & 	85.37 & 	74.55 & 	69.40 & 	65.25 & 	81.98 & 	76.16 & 	72.99 & 	70.25 & 	94.44 & 	92.53 & 	91.47 & 	90.54 \\
DINOv2   &   \textbf{84.11} & 	\textbf{62.42} & 	40.74 & 	32.68 & 	26.91 & 	93.45 & 	88.50 & 	86.26 & 	84.16 & 	\textbf{84.77} & 	79.26 & 	76.62 & 	74.79 & 	97.60 & 	96.66 & 	96.21 & 	95.87 \\
\midrule
AFC   &   74.91 & 	89.77 & 	76.54 & 	67.35 & 	55.01 & 	97.28 & 	93.67 & 	91.11 & 	87.67 & 	96.46 & 	93.65 & 	91.37 & 	87.80 & 	99.05 & 	98.29 & 	97.64 & 	96.57 \\
\midrule
\textbf{Theoretical Bound}   &   -  &	91.02 & 	75.37 & 	62.33 & 	42.31 & 	91.02 & 	75.37 & 	62.33 & 	42.31 & 	95.40 & 	86.83 & 	79.01 & 	65.37 & 	95.40 & 	86.83 & 	79.01 & 	65.37 \\
R-MH (ConvNext-T)   &   72.87 & 	94.30 & 	83.93 & 	74.62 & 	58.12 & 	97.94 & 	94.01 & 	90.08 & 	82.95 & 	97.85 & 	95.15 & 	92.59 & 	87.80 & 	99.23 & 	98.25 & 	97.23 & 	95.21 \\
R-R (ConvNext-T) &   73.31 & 	93.47 & 	81.54 & 	70.84 & 	52.85 & 	97.70 & 	93.46 & 	89.66 & 	82.63 & 	97.52 & 	94.38 & 	91.39 & 	85.83 & 	99.14 & 	98.06 & 	97.09 & 	95.34 \\
R-MH (ConvNext-L)   &   78.93 & 	94.98 & 	85.58 & 	76.89 & 	61.04 & 	\textbf{98.50} & 	\textbf{95.68} & 	\textbf{92.80} & 	\textbf{87.36} & 	98.11 & 	95.71 & 	93.38 & 	88.82 & 	\textbf{99.44} & 	\textbf{98.75} & 	\textbf{98.05} & 	\textbf{96.54} \\
R-R (ConvNext-L)   &   78.10 & 	94.48 & 	84.18 & 	74.63 & 	58.43 & 	98.28 & 	94.96 & 	91.76 & 	86.20 & 	97.91 & 	95.25 & 	92.70 & 	87.83 & 	99.36 & 	98.53 & 	97.71 & 	96.26 \\
R-R (open-CLIP)   &   67.77* & 	\textbf{95.15} & 	\textbf{86.26} & 	\textbf{77.95} & 	\textbf{63.38} & 	97.91 & 	94.06 & 	90.43 & 	83.75 & 	\textbf{97.97} & 	\textbf{95.39} & 	\textbf{92.87} & 	\textbf{88.19} & 	99.17 & 	98.14 & 	97.23 & 	95.64 \\
R-MH (open-CLIP)   &   65.75* & 	94.63 & 	84.58 & 	75.22 & 	59.09 & 	97.79 & 	93.71 & 	90.07 & 	83.20 & 	98.17 & 	95.90 & 	93.65 & 	89.38 & 	99.21 & 	98.26 & 	97.34 & 	95.54 \\
\textbf{R-R (DINOv2)}   &   \textbf{82.66} & 	\textbf{94.73} & 	\textbf{85.04} & 	\textbf{76.21} & 	\textbf{61.47} & 	\textbf{98.68} & 	\textbf{96.19} & 	\textbf{94.01} & 	\textbf{89.88} & 	\textbf{98.00} & 	\textbf{95.51} & 	\textbf{93.13} & 	\textbf{88.76} & 	\textbf{99.51} & 	\textbf{98.89} & 	\textbf{98.34} & 	\textbf{97.40} \\
\textbf{R-MH (DINOv2)}    &  \textbf{82.43}  & \textbf{ 	95.14 }  & \textbf{ 	86.10 }  & \textbf{ 	77.78 }  & \textbf{ 	63.08 }  & \textbf{ 	98.78 }  & \textbf{ 	96.41 }  & \textbf{ 	94.06 }  & \textbf{ 	89.45 }  & \textbf{ 	98.16 }  & \textbf{ 	95.83 }  & \textbf{ 	93.53 }  & \textbf{ 	89.26 }  & \textbf{ 	99.54 }  & \textbf{ 	98.95 }  & \textbf{ 	98.39 }  & \textbf{ 	97.17} \\
    \bottomrule
    \end{tabular}
  }
  \caption{All models accuracy, adversarial robustness (\ref{eq:adv-rob}) and consistency (\ref{eq:consistency}) in terms of nearest neighbor and of class change, for shift sizes of 1,3,5 and 9 pixels. Accuracy is evaluated with linear classification (*besides open-CLIP and vanila-ViT that are evaluated with K-NN) on a random sample of 1000 images from ImageNet validation set. R-R stands for RICS-rand and R-MH for the use of Mexican-Hat kernel.}
  \label{tab:results}
\end{table}
\newcommand{\R}{\mathbb{R}}
\newcommand{\N}{\mathbb{N}}

\section{Our Method}
\label{sec:Our Method}
\subsection{Robust Inference by Crop Selection}

\begin{figure}[htbp]
  \centering  \fbox{\includegraphics[width=0.95\linewidth]{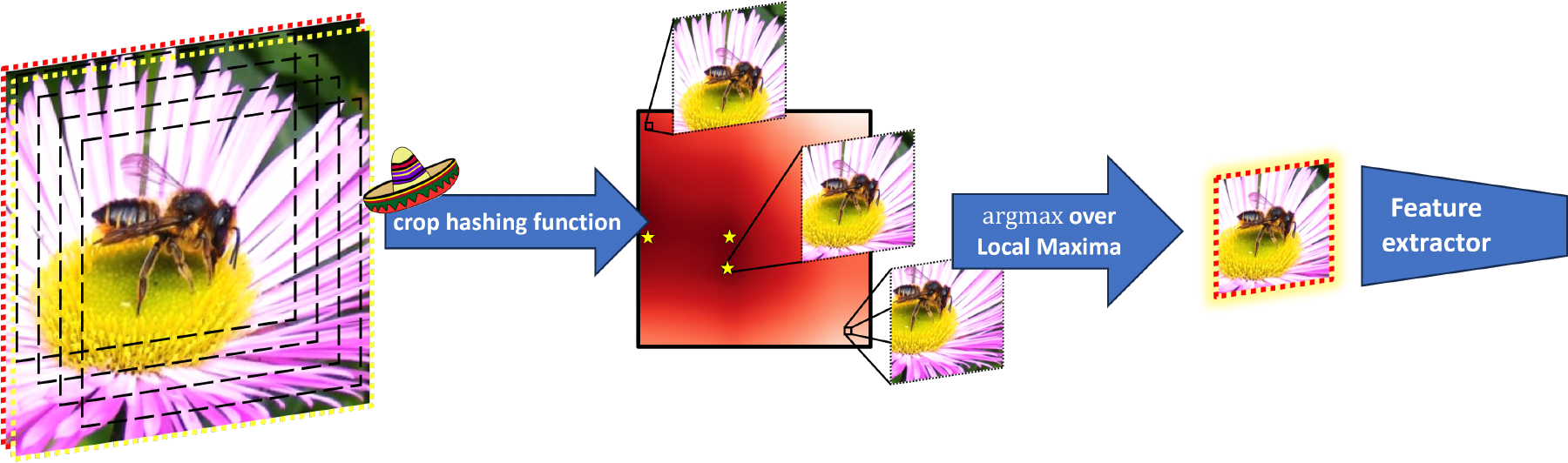}}
  \caption{Our proposed method: Robust Inference by Crop Selection (RICS): First assign scores to all of the different crops of the image with some deterministic function, then choose the crop with the maximal score (only if it's also a local maximum: it has 8 neighbors, and it's score is the highest among them). This crop is passed through a feature extractor. \textbf{Left:} An image and it's translated version, with few crops marked. \textbf{Center:} The crops' scores as a heatmap. local maximums are highlighted with a star. \textbf{Right:} The chosen crop is passed to the feature extractor.}
  \label{fig:algorithm12}
\end{figure}

We present a method that allows us to convert any classifier into one that is robust to realistic image translations {\em without the need for retraining}. The method is based on the classical object recognition paradigm that is standard in computer vision: given an image it is customary to first find the bounding box of the object and then pass the subimage within the bounding box to a classifier (e.g. ~\cite{girshick2014rich}). The problem of image classification is different from that of object detection (e.g. many ImageNet categories are not based on a single object on a uniform background and there is no well-defined object to detect) but we use it as inspiration for our method.

During training of image classifiers, it is common practice to perform data augmentation by using different crops of the same image. Our method is built on the simple observation that we can make a classifier robust to translations by choosing a subcrop of the image during inference, not training. 

Our method, which we call \textbf{Robust Inference by Crop Selection} is illustrated in ~ \cref{fig:algorithm12} and \cref{alg:robust-inference}: Given an image, we first assign scores to all of the different crops of the image and then choose the crop with the maximal score  (among the local maxima, i.e. a score must be larger than the score of its 8 neighboring crops and not at the edge). This crop is then passed through the standard neural network to serve as the feature extractor of the image, and the output of our method on the image is defined as the output to the specific crop.

At first glance, our method seems to require a complicated score function that will include exactly the relevant parts of the image. But in fact, in order to ensure robustness, all we require is that the crop be chosen in a consistent manner. Consider again figure~\ref{fig:algorithm12} and two images of the bee that differ by a single pixel translation. In order for our method to yield consistent classification, we only require that the same crop be chosen for the two images but we do not require that the crop be exactly centered on the bee.

In our implementation we use two score functions. One which we call {\bf RICS-rand} (R-R) involves performing a dot product of the crop with a fixed, random, filter (each pixel in the filter is chosen IID from a Gaussian, and the filter is constant for all images and crops). The output of the dot-product is then passed through a  modulo function. We use this score function to illustrate the fact that our score function is not required to be a good object detector. A dot product with a random filter is certainly not a good method to choose the crop most likely to contain the object at the center, but it is still deterministic: the score only depends on the pixel values within a crop and as we show in our theoretical analysis, a deterministic score function that is pseudo-random is enough to guarantee robustness of our method.

The second function is more closely related to classical methods for object recognition but still very simplistic. We perform a dot-product with a "Mexican Hat" filter \cite{reddy2014object, sarvaiya2011automatic} and
after calculating the scores of all patches, return the crop that has the maximal score, and is also a local maximum (has a larger score than that of the 8 neighbor crops). We start with a Mexican Hat kernel with high standard deviation (50 pixels), that should detect big region of interests, and if this returns scores that are very smooth, with no local maximum (besides the edge ones), we use a smaller kernel (20 or even 9 pixels). We call this method \textbf{RICS-MH} (R-MH). A formal definition of our method follows.

\begin{algorithm}[H]
\caption{Robust Inference by Crop Selection}\label{alg:robust-inference}
\begin{algorithmic}[1]
\linespread{1.2}\selectfont
    \Statex \textbf{Input:} An image $\bm{I \in \mathbb{R}^{n \times n}}$, crop size $\bm{k \times k}$, scoring function $\bm{g: \R^{k \times k}\rightarrow \R}$, and a classifier $\bm{f_\theta: \R^{k \times k}\rightarrow \R^d}$
    \Statex \textbf{Output:} Feature representation, $\bm{y \in \mathbb{R}^d}$.
    \State let $G \in R^{{(n-k+1)}^2}$ be a matrix of the crop scores $G_{i,j} = g(C_{i,j})$ for every crop $0 \leq i,j \leq n-k $. 
    \State perform Non Maximum Suppression on $G$
    \State let $C = \arg\max_{i,j} G_{i,j}$ be the crop who's score is the maximal.
    \State \textbf{return} $f_\theta(C)$
\end{algorithmic}
\end{algorithm}

\subsection{Theory}

As mentioned, our method is based on the observation that we do not need to find the "best" crop within an image before running the network. We now show that even a random score function will guarantee a high degree of robustness.

\begin{definition}
(Pseudo-Random Crop Function): a deterministic score function for a crop is called a "Psuedo-random crop function" if for any image, the maximum of the score function within this image is equally likely to be at any crop of the image. 
\end{definition}

If the crop score function were simply chosen randomly for each crop, then it would trivially satisfy the requirement that the maximum is equally likely to be at any crop. But we also require the function to be a deterministic function of the pixels in the crop (similar to random hash functions). Assuming we can find such a crop function, we can guarantee robustness. 

\begin{figure}[hb!]
    \centering
    \fbox{\includegraphics[width=0.95\linewidth]{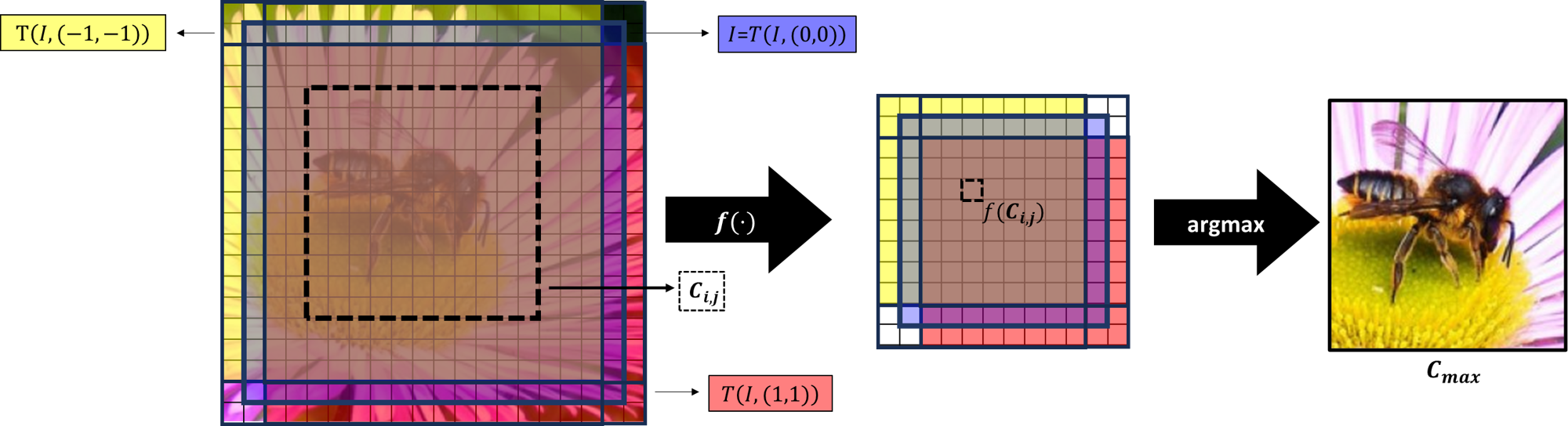}}
    \caption{Illustration of the proof of Theorem 1. If the image is $n\times n$ pixels, and the crop is $k \times k$, than from a total of $(n-k+3)^2$ different crops that correspond to 1 pixel translations, $(n-k-1)^2$ are shared, meaning the probability that the algorithm predicts the same crop for the image and all of its translations is $\left( \frac{n-k-1}{n-k+3}\right)^2$. 
    On the left side you can see the image $I$ (in \textcolor{blue}{blue}) and two versions of it translated by a single pixel colored in \textcolor{yellow}{yellow} and \textcolor{red}{red}. $C_{i,j}$ is the crop that gets the highest score and returned by algorithm \ref{alg:robust-inference}.}
    \label{fig:score_cdf}
\end{figure}

\begin{theorem}
Consider a base classifier that classifies $n \times n$ images. Suppose that we use the RICS method (algorithm 1) with $k \times k$ crops and a pseudo-random score function to convert the base classifier into a new classifier, then the probability that an adversary can alter the output of the new classifier by a realistic translation of size at most 
$\Delta$ pixels is bounded by:

\begin{equation}
p(\text{"adversary\_succeeds"}) \leq 1- \left(\frac{n-k+1-2 \Delta}{n-k+1+2 \Delta}\right)^2
\end{equation}
\end{theorem}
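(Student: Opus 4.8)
The plan is to recast all the shifted versions of $I$ as sub-windows of one enlarged master image and then reduce the theorem to counting crops. A realistic translation by $\delta=(\delta_x,\delta_y)$ with $|\delta_x|,|\delta_y|\le\Delta$ simply slides the $n\times n$ window across the underlying scene, so the union of all windows the adversary can produce is a single image of size $(n+2\Delta)\times(n+2\Delta)$; call it $M$. The key observation — and the reason the score function need only be deterministic — is that $g$ depends solely on the pixel content of a crop. Hence a fixed $k\times k$ region of $M$ receives \emph{the same score} regardless of which shifted window it is extracted from. Consequently, running Algorithm~\ref{alg:robust-inference} on the window $T(I,\delta)$ is exactly: take the $\arg\max$ of $g$ over those crops of $M$ that lie inside that particular window.

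Next I would reduce ``the adversary succeeds'' to a statement about where the global maximum of $g$ lands. Let $C^\star$ be the single highest-scoring crop of the whole master image $M$. If $C^\star$ lies in the intersection of \emph{all} admissible windows, then it is available in every $T(I,\delta)$, and being the global maximizer it is selected in each of them; all windows then return the identical feature $f_\theta(C^\star)$, and the adversary cannot change the output. Thus the event ``$C^\star$ is shared by all windows'' implies ``the adversary fails,'' so it suffices to lower-bound the probability of this event.

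The remaining step is pure counting. Indexing crops by their top-left corner, the crops of $M$ have corners ranging over $n-k+1+2\Delta$ values per axis, for a total of $(n-k+1+2\Delta)^2$ crops. The intersection of all windows is itself an $(n-2\Delta)\times(n-2\Delta)$ image, whose crops have corners ranging over $n-k+1-2\Delta$ values per axis, giving $(n-k+1-2\Delta)^2$ shared crops. By the pseudo-random property, the global maximum of $g$ over $M$ is equally likely to be at any of its crops, so
\[
p(C^\star \text{ shared}) = \left(\frac{n-k+1-2\Delta}{n-k+1+2\Delta}\right)^2,
\]
and $p(\text{adversary succeeds}) \le 1 - p(C^\star \text{ shared})$ yields the claimed bound.

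The part that needs the most care is the boundary bookkeeping — pinning down the $\pm 2\Delta$ offsets and the off-by-one endpoints exactly, as illustrated in Figure~\ref{fig:score_cdf} for $\Delta=1$ — together with a clean justification that the uniform-argmax property, stated in the definition for a single image, legitimately transfers to the crop set of the master image $M$. A secondary subtlety worth flagging is that this idealized analysis treats selection as the plain $\arg\max$ of the pseudo-random score over the available crops; the local-maximum / non-maximum-suppression refinement used in the implementation is a practical disambiguation device and is not required for (and does not enter) this bound.
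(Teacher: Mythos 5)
Your proposal is correct and follows essentially the same route as the paper: a counting argument in which the adversary fails whenever the maximum of the pseudo-random score lands in the region of crops shared by all translated windows, with the bound given by the ratio $\left(\frac{n-k+1-2\Delta}{n-k+1+2\Delta}\right)^2$ of shared to total crop positions. Your ``master image'' formalization and the explicit handling of the $\pm 2\Delta$ bookkeeping simply make rigorous what the paper states tersely and illustrates in its figure for $\Delta=1$.
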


\begin{proof} The proof is based on a counting argument and is illustrated in figure~\ref{fig:score_cdf}. If the crop that is selected in algorithm 1 is the same for all translations of the image, then the adversary will surely fail. Since algorithm 1 chooses a crop by choosing the maximum of a pseudo-random score function, the maximum is distributed uniformly over the image. In order for all translations to choose the same crop, the maximum should not be at the edge of the image, as shown in figure~\ref{fig:score_cdf} and the probability that this happens is given by the ratio of the areas of the center and the area of the full image. 
\end{proof}
For concreteness, suppose we are classifying $256 \times 256$ images with a crop size of $150 \times 150$, then theorem 1 guarantees that {\em any classifier of $150 \times 150$ images can be converted into a classifier whose adversarial robustness to 1 pixel translations is at least $93\%$} using the RICS method. Note that the result is for realistic translations: we are not assuming cyclical translations as in~\cite{chaman2021truly,michaeli2023aliasfree} but rather our theoretical result holds for translations where information exits and enters the field of view. Note also that even though our theorem is stated in terms of adversarial accuracy, a similar result can be proven for consistency as well: any classifier can be converted into a method that satisfies a rigorous lower bound on the consistency with realistic transformations, and from similar counting argument we deduce a lower bound on the consistency:
\begin{equation}
p("consistent\_prediction") \geq \left(\frac{n-k+1- \Delta}{n-k+1+ \Delta}\right)^2
\end{equation}

Figure~\ref{fig:lower_bound-a} shows the lower bound on the adversarial robustness to translations of any classifier when using our RICS method as a function of the  size of the crop $k$, starting with an image of size $256 \times 256$ pixels. Note that this is a lower bound, and depending on the base classifier, the actual robustness may be much higher. This is because even if the adversary succeeded in changing the crop that is given to the classifier by algorithm~\ref{alg:robust-inference}, a good classifier will often output the same output to different crops and thus the adversary will fail to modify the classifier's output. 

\begin{figure}[h]
  \begin{subfigure}{0.5\linewidth}
    \includegraphics[width=1\linewidth]{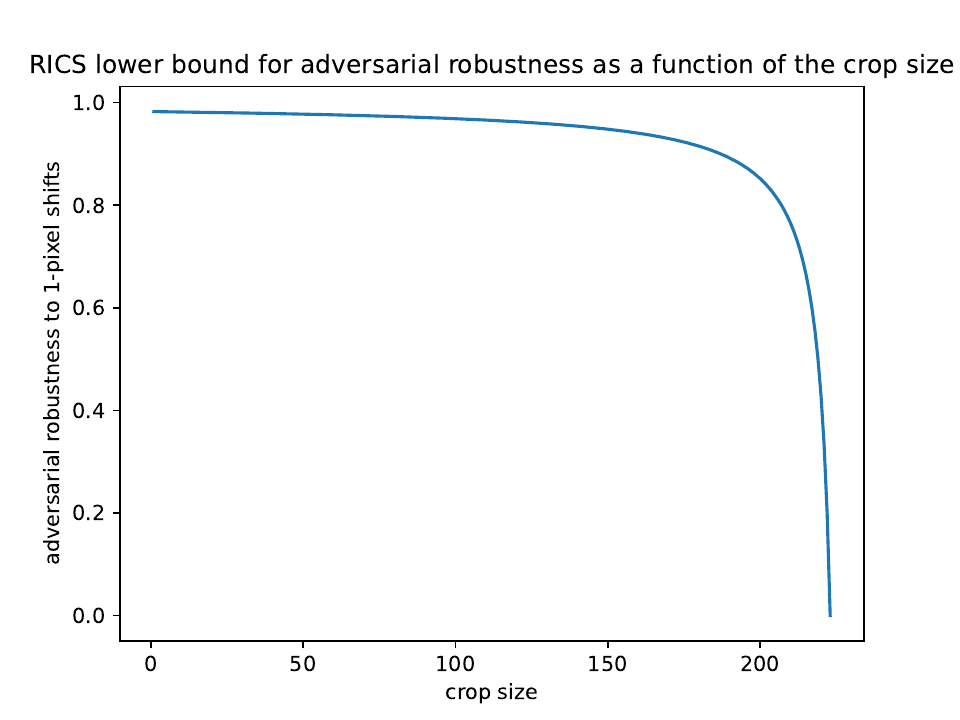}
    \caption{RICS lower bound for the adversarial robustness}
    \label{fig:lower_bound-a}
  \end{subfigure}
  \hfill
  \begin{subfigure}{0.5\linewidth}
   \includegraphics[width=1\linewidth]{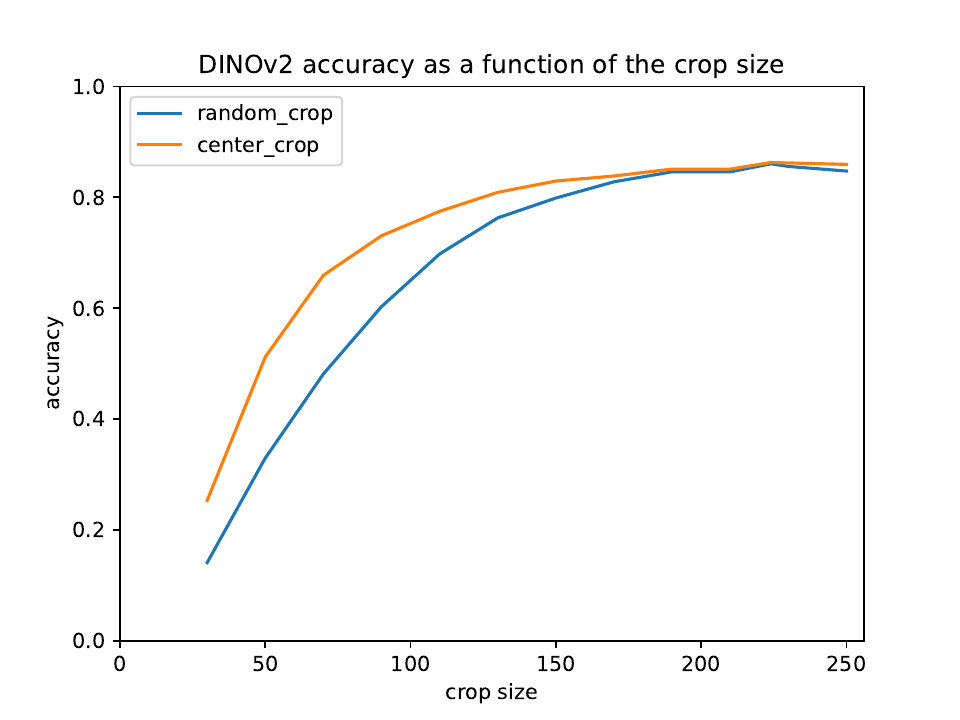}
    \caption{Accuracy of DINOv2 as a function of the crop size}
    \label{fig:lower_bound-acc}
  \end{subfigure}
  \caption{(a) RICS lower bound for the adversarial robustness  when using images of size $224 \times 224$ and shift size of $\Delta = 1$. As the crop size increases, robustness goes down. (b)The accuracy of DINOv2 for different crop sizes. Accuracy goes up as a the crop size increases and more relevant information exist in the image, and center crops are only slightly better than random crops.}
  \label{fig:lower_bound}
\end{figure}

Figure~\ref{fig:lower_bound-acc} shows that accuracy increases as the crop size increases and that random crops are only slightly worse than center crops. The fact that accuracy increases but robustness decreases with crop size means that there is a tradeoff between accuracy and robustness.  In practice, different applications will choose different crop sizes depending on how they want to address this tradeoff.

\subsection{Cyclic Translations}
As mentioned in \cref{sec:intro}, previous works have already achieved 100\% robustness to cyclic translations using polyphase components \cite{chaman2021truly, rojas2022learnable, rojas2023making} or polynomial activations \cite{michaeli2023aliasfree}.
Despite our focus on realistic (non-cyclic) translations, our method can be easily adapted to cyclic shifts as well, by including cyclic crops, ensuring 100\% robustness as well. The fact that our method require no further training, and can be applied as a pre-process to a state-of-the-art model, allows us to achieve state-of-the-art accuracy as well as 100\% robustness to cyclic-translation. The only change in the algorithm is  that we replace the definition of crops in~\cref{alg:robust-inference} with cyclic crops:

\begin{definition}
    A cyclic crop of size $k \times k$ at location $i,j$ of an image $I$ is defined as the top left subimage of $I$ after it has been {\em cyclically}  translated so that location $i,j$ is at the top left. 
\end{definition}

\begin{theorem}
Consider a base classifier that classifies $n \times n$ images.  For any crop size $k$ and any deterministic scoring function, consider using algorithm~\ref{alg:robust-inference} with cyclic crops to convert the base classifier into a new classifier. Then an adversary can not alter the output of the new classifier by any integer cyclic translation. In other words, the adversarial robustness (and therefore also the consistency) of the  new classifier is 100\%.
\end{theorem}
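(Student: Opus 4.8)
The plan is to show that a cyclic translation of the input image merely permutes the set of cyclic crops---in fact it cyclically shifts their score array---while leaving the actual pixel content of each crop unchanged up to this relabeling. Since the scoring function $g$ depends only on the pixels inside a crop, this forces the selection rule in Algorithm~\ref{alg:robust-inference} to return a crop with \emph{identical} content for the original and the translated image, so that $f_\theta$ outputs exactly the same feature vector and the adversary cannot change anything.

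Concretely, I would first put coordinates on the torus $\mathbb{Z}_n \times \mathbb{Z}_n$ and, for an image $I$, write $C_{i,j}(I)$ for the cyclic crop anchored at $(i,j)$. The key identity is that for any integer cyclic shift $\delta=(\delta_1,\delta_2)$ one has $C_{i,j}(T(I,\delta)) = C_{(i,j)+\delta}(I)$: the crop at $(i,j)$ of the shifted image coincides pixel-for-pixel with the crop of the original image anchored $\delta$ away. This is immediate from the definition of a cyclic crop, since cyclically shifting the image and then extracting a top-left $k\times k$ window is the same as extracting the window anchored at the shifted location. Consequently the scores transform equivariantly, $g(C_{i,j}(T(I,\delta))) = g(C_{(i,j)+\delta}(I))$, so the score array of $T(I,\delta)$ is just the cyclic shift by $\delta$ of the score array of $I$.

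The second step is to observe that the selection procedure---non-maximum suppression followed by the global $\arg\max$---is equivariant under cyclic index shifts. Here the crucial point, and the reason the bound is exactly $100\%$ rather than the probabilistic bound of Theorem~1, is that with cyclic crops there are \emph{no} boundary crops: every anchor has a full $8$-neighborhood on the torus, and the neighbor relation is itself shift-invariant. Hence $(i,j)$ is a local maximum of the score array of $I$ iff $(i,j)-\delta$ is a local maximum of the score array of $T(I,\delta)$, while the global maximal value is identical for both images. Therefore the anchor selected for $T(I,\delta)$ is exactly the shift by $-\delta$ of the anchor selected for $I$, and by the key identity the two selected crops have identical pixel content. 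Feeding identical pixels to $f_\theta$ yields identical outputs, which makes both consistency and adversarial robustness equal to $100\%$.

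The main obstacle I anticipate is not the counting---there is none---but the well-definedness of the $\arg\max$ in the presence of ties. The argument above is airtight whenever the maximal score is attained at a unique anchor; if several crops share the maximal score, the multiset of maximal crop contents is still invariant, but a tie-breaking rule based on the anchor index would destroy equivariance. I would close this gap either by restricting to the generic case in which $g$ separates the finitely many crops (which holds for the pseudo-random and Mexican-hat scores used in practice), or, to retain the statement for an \emph{arbitrary} deterministic $g$, by specifying a tie-breaking rule that depends only on crop content (for instance lexicographic order on the pixel vector), which is manifestly shift-invariant and hence preserves the equivariance of the entire pipeline.
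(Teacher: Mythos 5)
Your proposal is correct and takes essentially the same route as the paper: the paper's proof simply observes that the set of cyclic crops of $I$ and of $T_{cyclic}(I,\Delta)$ is identical (all $n^2$ of them), so a deterministic scoring-and-selection procedure picks the same crop, which is exactly your permutation/equivariance argument spelled out in coordinates. Your closing remark about ties in the $\arg\max$ identifies a real subtlety that the paper's one-line proof glosses over --- its phrase ``choose deterministically'' implicitly requires a selection rule that depends only on the \emph{set} of crops rather than on their anchor indices, and your content-based tie-breaking is the right way to make that precise.
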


\begin{proof}
For any crop size $k$, the set of cyclic crops of an image $I$ and the set of cyclic crops of the translated version $T_{cyclic}(I, \Delta)$ is exactly the same regardless of the translation size $\Delta$ (as long as the translation is in full pixel values, and not sub-pixel translation), and consist of $n^2$ crops. Since we evaluate the score of all possible crops and choose deterministically, then \cref{alg:robust-inference} will always choose the same cyclic crop, and hence the prediction of the classifier will be the same.
\end{proof}

Note that introducing cyclic crops means that a crop chosen by the algorithm may be one that is unnatural: it may include for example pixels from the bottom of the image that become adjacent to pixels from the top of the image and in principle, our method may decrease the accuracy of the base classifier. Nevertheless, for any classifier, our method is guaranteed to return a new classifier that is perfectly robust to cyclic translations. In the next section we evaluate how much accuracy is decreased as a result of using cyclic crops.

\label{sec:Results}
\begin{figure}[h!]
  \centering
  \begin{subfigure}{0.52\linewidth}
    \centering
    \includegraphics[width=\linewidth]{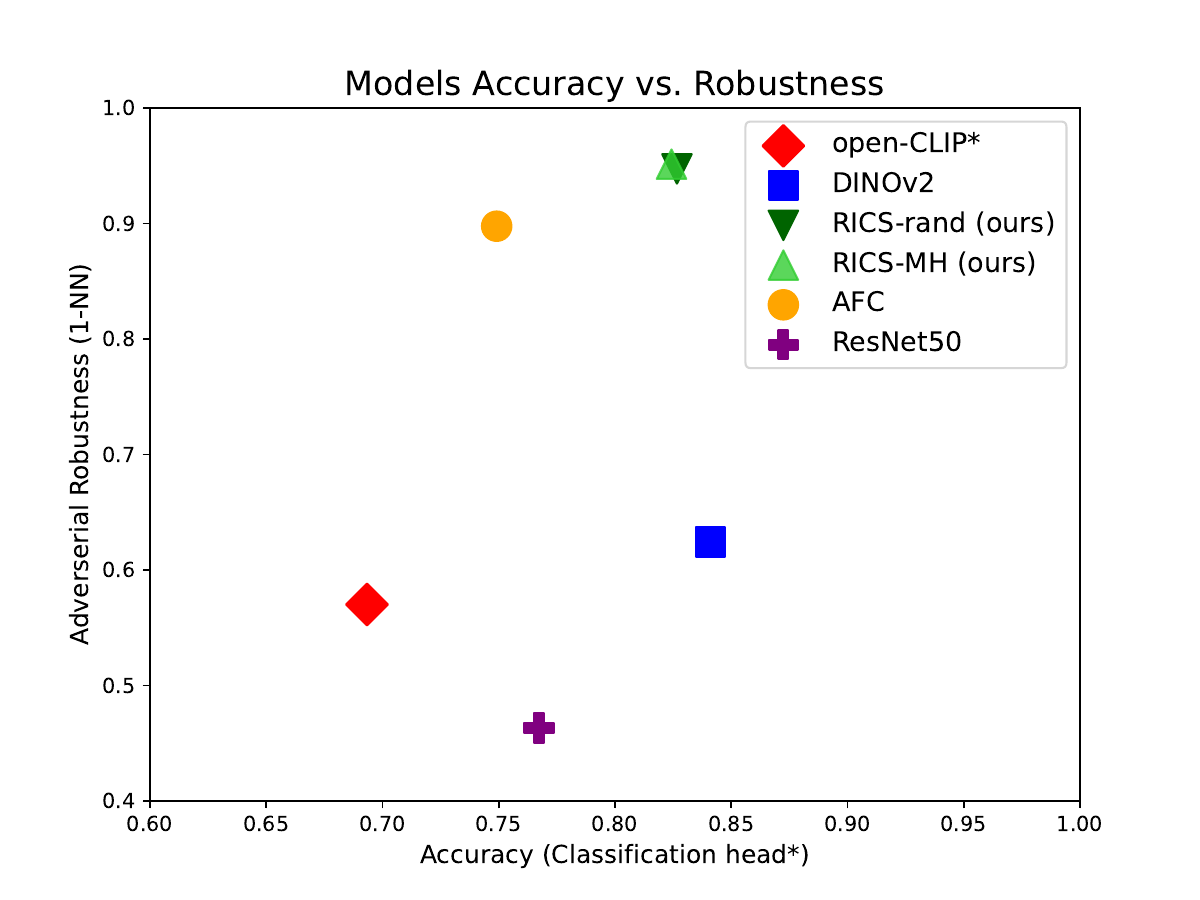}
    \label{subfig:acc_cls_rob}
  \end{subfigure}\hfill
  \begin{subfigure}{0.47\linewidth}
    \centering
    \includegraphics[width=\linewidth]{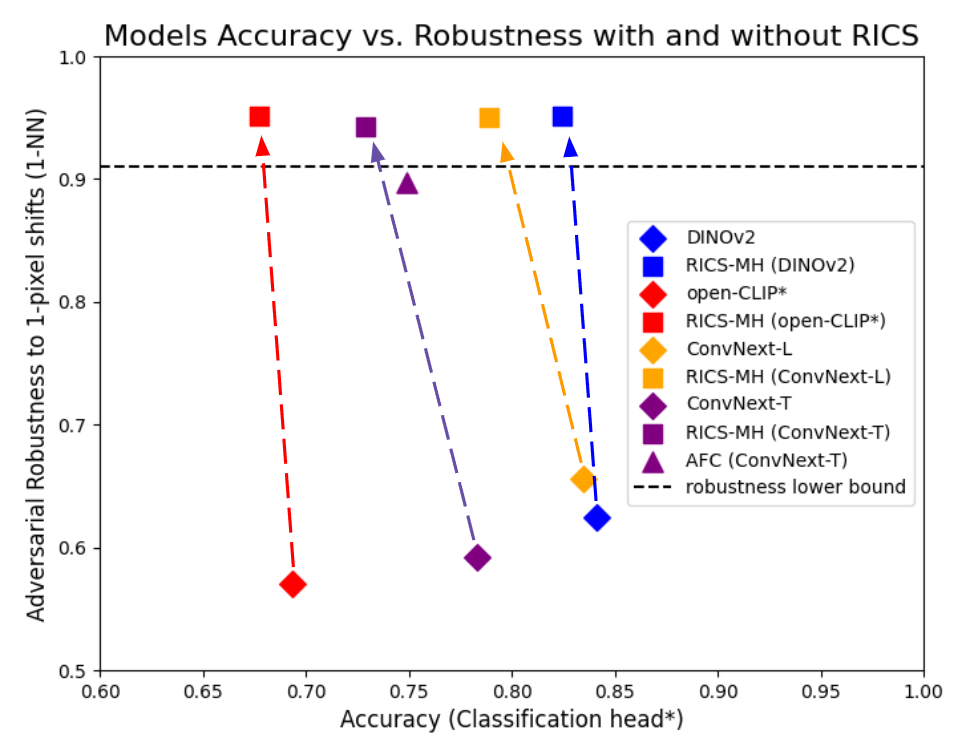}
    \label{subfig:acc_rob_rebuttal}
  \end{subfigure}
  \caption{The accuracy-robustness trade-off. The left plot shows that our method achieves the best robustness compared to exiting methods, while barely losing accuracy comparing to the base model we use, DINOv2. The right plot illustrates that our method can be used to improve the robustness of any model without retraining. The adversarial robustness is based on change of the first Nearest neighbor with 1 pixel shifts.}
  \label{fig:acc_cls_rob}
\end{figure}

\section{Results}

\begin{figure}[b!]
  \centering
  \begin{subfigure}{0.48\linewidth}
    \includegraphics[width=\linewidth]{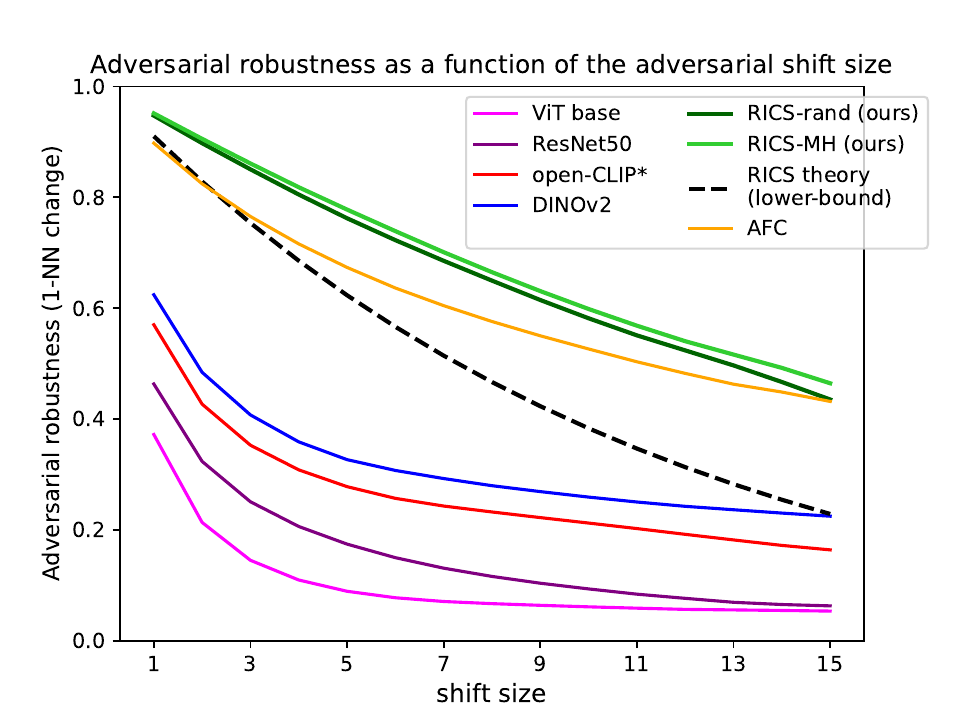}
    \caption{Adversarial robustness as the change of the 1-NN}
    \label{fig:rob_1NN}
  \end{subfigure}%
  \begin{subfigure}{0.48\linewidth}
    \includegraphics[width=\linewidth]{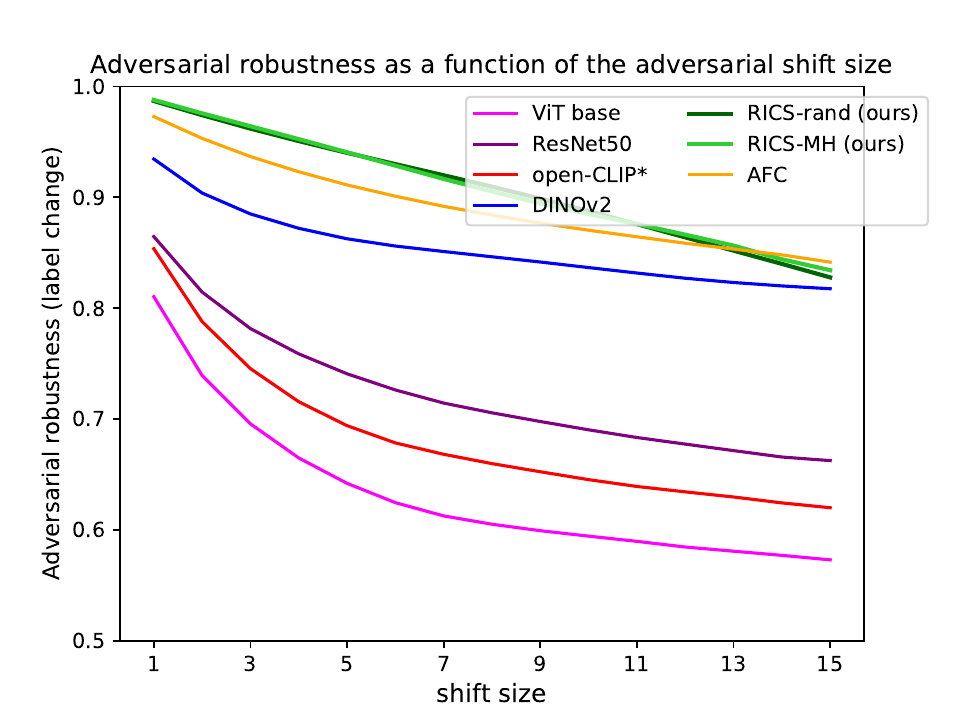}
    \caption{Adversarial robustness as the change of the class}
    \label{fig:rob_cls}
  \end{subfigure}
  
  \begin{subfigure}{0.48\linewidth}
    \includegraphics[width=\linewidth]{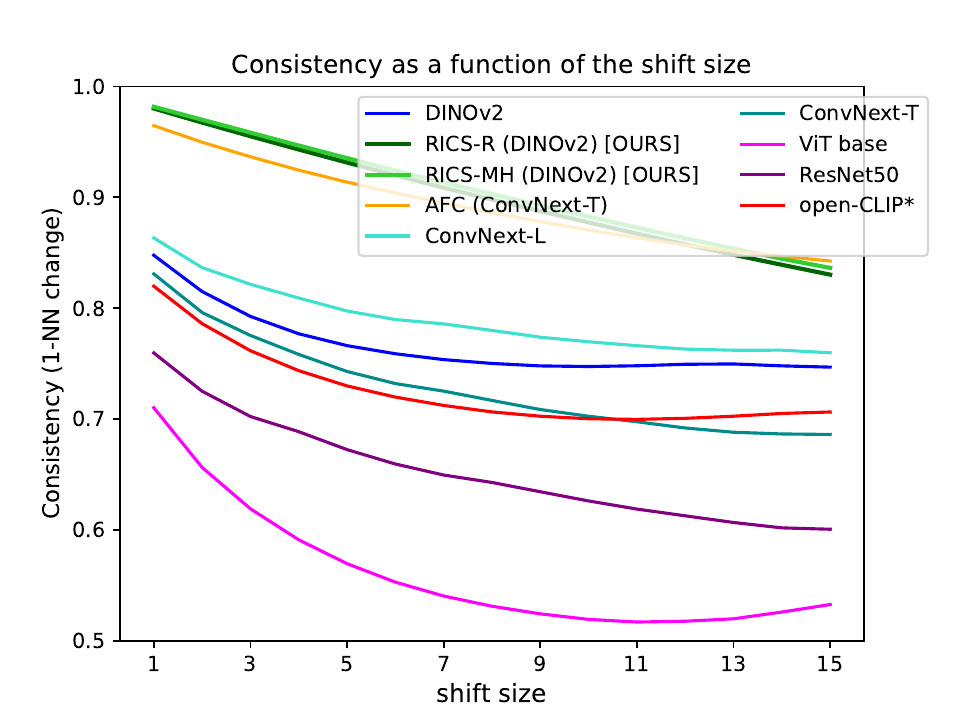}
    \caption{Consistency of the nearest neighbor}
    \label{fig:cons_1NN}
  \end{subfigure}%
  \begin{subfigure}{0.48\linewidth}
    \includegraphics[width=\linewidth]{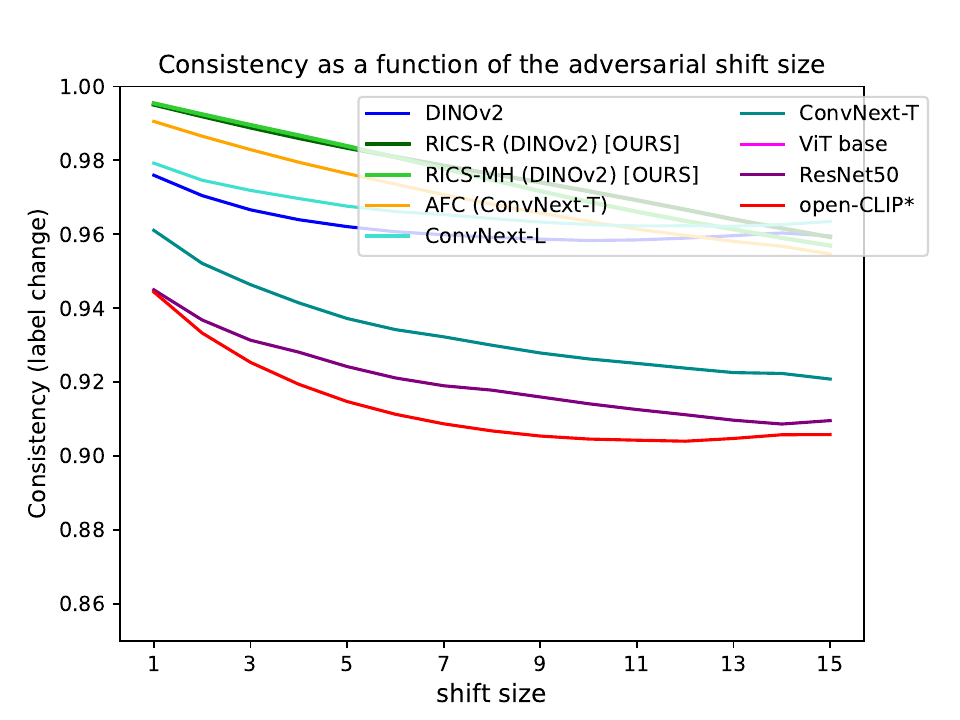}
    \caption{Consistency of the predicted class}
    \label{fig:cons_cls}
  \end{subfigure}

  \caption{Adversarial robustness \ref{eq:adv-rob} and consistency \ref{eq:consistency} as functions of the shift size. As we allow bigger shifts, all models tend to fail more. Adversarial robustness is depicted as a function of the shift size. Subplot (a) illustrates that not only our method is more robust than other methods, even the lower bound is. Subplot (b) shows that when measuring robustness in terms of label change, model accuracy contributes to its robustness. However, the overall ranking of the models remains mostly unchanged. Subplots (c) and (d) shows similar trends on the consistency metric. }

  \label{fig:combined}
\end{figure}

We compare our method to two of the best models that have been published as  open code with trained weights: Open-Clip \cite{schuhmann2022laion} (trained on 2 billion image-text pairs) and DINOv2 \cite{oquab2023dinov2}, as well as AFC~\cite{michaeli2023aliasfree}, the latest version of a model that is constructed to be translation invariant to cyclic shifts. We also compare to three baseline methods: ConvNext \cite{liu2022convnet}, ResNet50 \cite{he2016deep} and vanila-ViT \cite{wu2020visual} trained on ImageNet in the regular supervised manner.

For the evaluation with realistic translations we use a subset of ImageNet consisting of 1000 images, chosen randomly from all images whose object is known to be in all of the $224 \times 224$ pixel crops (after resizing the image to $256\times 256$). We used these images because we wanted to ensure that all the images that we create by small translations still include the main object and to avoid cases where the some translations push the object out of the image. \cite{ILSVRC15}

In order to enable comparison to previous work, we apply our method as if the input is a $224 \times 224$ image. For the realistic translations, we choose the crop size $k$ in our method to be $140 \times 140$ because we have found that this leads to a very small decrease in accuracy while giving high robustness (as can bee seen in Figure ~\ref{fig:lower_bound}). In practice, ImageNet images are usually resized to $256 \times 256$ and then a $224 \times 224$ crop is selected. If our method was used to select $224 \times 224$ crops, then we would expect no decrease in accuracy compared to standard classifiers.

\cref{fig:acc_cls_rob} shows that our method can be applied to any classifier, but our best results are obtained with a baseline DINO-v2. We use both a random filter and a Mexican hat filter. As can be seen in ~\cref{fig:acc_cls_rob} and \cref{tab:results}, our method is the only one to achieve top-1 accuracy of over 80\%  with an adversarial robustness of about 95\% to realistic 1 pixel translations. It can also be seen that for any base classifier, our method significantly improves the robustness with a negligible decrease in accuracy. ~\cref{fig:rob_1NN,fig:rob_cls} shows the adversarial robustness of different models as a function of the maximal allowed translation along with the lower bound predicted by our analysis, and \cref{fig:cons_cls,fig:cons_1NN} shows the same in terms of consistency. For all range of translations, our method is the most robust. This is true both when adversarial robustness and consistency are evaluated using the index of the first nearest neighbor or when they are evaluated using the class.

\textbf{Cyclic Results:} As stated in Theorem 2, our adjusted method achieves 100\% robustness to cyclic translations. ~\cref{tab:cyclic} shows that applying our method our method as preprocess barely influences Dinov2 accuracy, which is 83.9\% using RICS-MH and 84.1\% without it in our experiments. These results shows that our method has higher accuracy than all previous methods solving the 100\% consistency task.

\begin{table}[htbp]
  \centering
    \resizebox{0.7\textwidth}{!}{
  \begin{tabular}{llcc}
    \toprule
    \textbf{Method} & \textbf{Base Model} & \textbf{Accuracy (\%)} & \textbf{Consistency (\%)} \\
    \midrule
    APS \cite{chaman2021truly} & ResNet-18 & 67.6 & 100 \\
    LPS \cite{rojas2022learnable} & ResNet-18 & 69.11 & 100 \\
    A-SwinV2-T \cite{rojas2023making} & MViTv2 & 79.91 & 99.98 \\
    A-MViTv2-T \cite{rojas2023making} & MViTv2 & 77.46 & 100 \\
    AFC \cite{michaeli2023aliasfree} & Convnext-tiny & 82.11 & 100 \\
    \midrule
    RICS-MH & Convnext-Large & 81.83 & 100 \\
    RICS-rand & DINO-v2 & 83.77 & 100 \\
    \textbf{RICS-MH} & \textbf{DINO-v2} & \textbf{83.93} & \textbf{100} \\
    \bottomrule
  \end{tabular}
  }
  \caption{Accuracy and Consistency to integer shifts for cyclic shits of our method verses four previus works. All models are evaluated on the ImageNet validation set (50K images), The results of the previous works in this table are copied from the papers. All methods achieves 100\% consistency, but our is the only one that can use any pretrained model with requires no further training, and using DINOv2 achieves the best accuracy while maintaining 100\% consistency. }
  
  \label{tab:cyclic}
  
\end{table}

Additional results and figures are attached in the appendices.

\label{sec:discussion}
\section{Limitations and Extensions}
The main limitation of our theoretical analysis and our proposed algorithm is that we can only guarantee robustness to small realistic image translations: the bound decreases with the size of the allowed translation. In contrast, when dealing with cyclical translations our method, as well as previous methods, can guarantee 100\% robustness to {\em any} circular translation. In order to extend our theory and our algorithm to larger translations, we need to tackle the difficult problem of going beyond a pseudo-random score function and towards score functions that are likely to capture the meaningful parts of the image. 

One possible approach towards meaningful score functions is to note that the RICS method with a score function that is based on dot-products can be differentiated with respect to the kernel. We can therefore try to improve our method by introducing a consistency loss that is applied on pairs of images and its translations. Note this this method would still not be computationally demanding and require only a single dot-product per crop. 

Another limitation of our method is that it can only handle integer translations. We believe that it can also be extended to fractional translations by appropriately upsampling the image prior to computing the crops, but we leave this investigation for future work. 

\section{Discussion}
The performance of deep neural networks in image classification has been continuously improving and one major factor allowing for this improvement has been the use of larger and larger datasets. This has led to the belief that the major failure modes of deep classifiers (i.e. their brittleness and inability to generalize to scenarios that are very different from the training set) will mostly be solved by using even larger datasets or more augmentations. In this paper, we have focused on one example of brittleness: the inability of modern classifier to maintain consistency when subjected to a realistic translation of a single pixel.
We have shown that methods that use sampling theory to provably provide robustness to cyclic shifts, don't solve the problem of realistic shifts (where a pixel exits the field of view, another enters on the other side). We have also shown that even classifiers that are trained on billions of training images do not learn to be robust to these trivial transformations. Instead of relying on sampling theory or on large-scale datasets, we suggested a simple method that to convert any classifier into a classifier that achieves a high degree of robustness. We presented a theory that proves a lower bound on the robustness and this bound by itself can be used to convert any classifier into one that has at least 93\% robustness to single pixel shifts (and even higher, with a price of accuracy), and 100\% robustness to any cyclic shift. We also presented experiments that show that the actual performance is even better. When using DINO2 as our baseline classifier we presented the first classifier that achieves over 82\% accuracy on ImageNet and over 95\% robustness to realistic single pixel shifts and a classifier that achieves about 84\% accuracy on ImageNet and over 100\% robustness to cyclic shifts.





\newpage

\bibliographystyle{splncs04}
\bibliography{main}
\clearpage
\newpage
\section*{Lost in Translation - Appendixes}
\label{appendices}

\subsection*{Appendix 1 - More examples of DINOv2 change in predicted label}

\begin{figure*}[ht!]
  \centering
  \begin{subfigure}{0.8\linewidth}  
    \fbox{\includegraphics[width=0.95\linewidth]{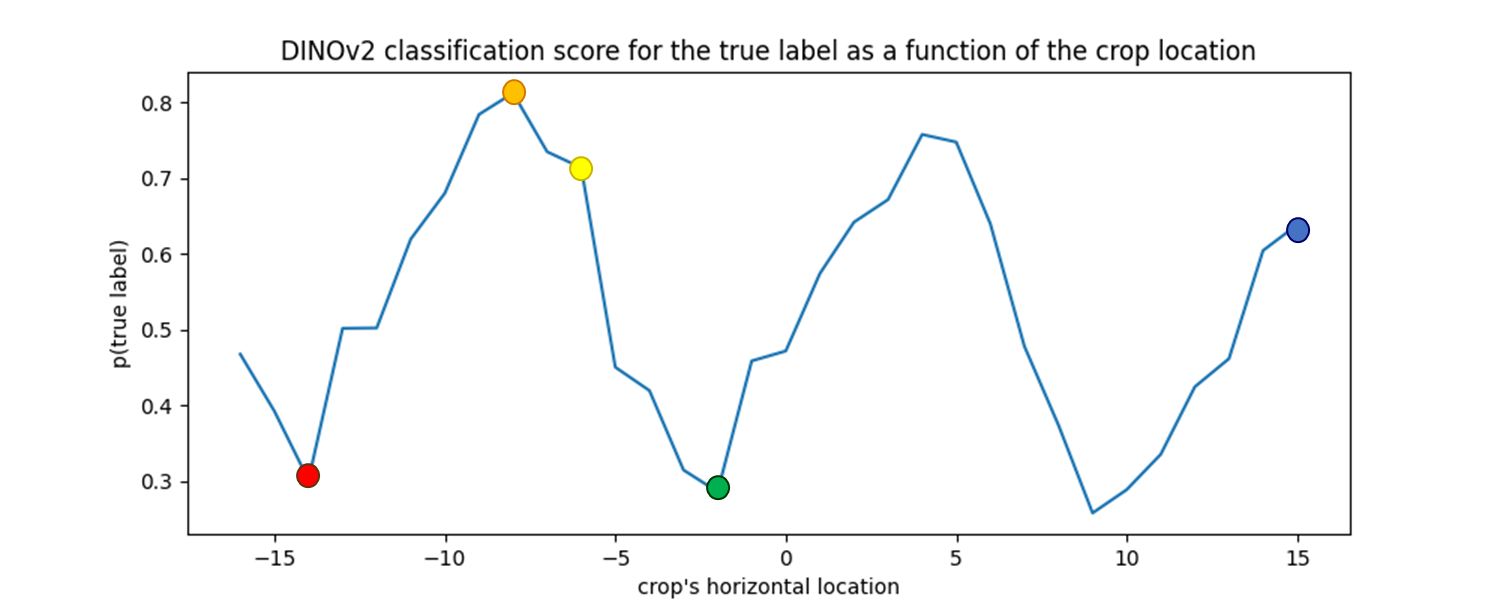}}
    \caption{DINOv2's output score for the true label ("black widow") as a function of the crop location.}
    \label{fig:widow}
  \end{subfigure}
  \hfill
  \begin{subfigure}{0.9\linewidth}
    \fbox{\includegraphics[width=0.95\linewidth]{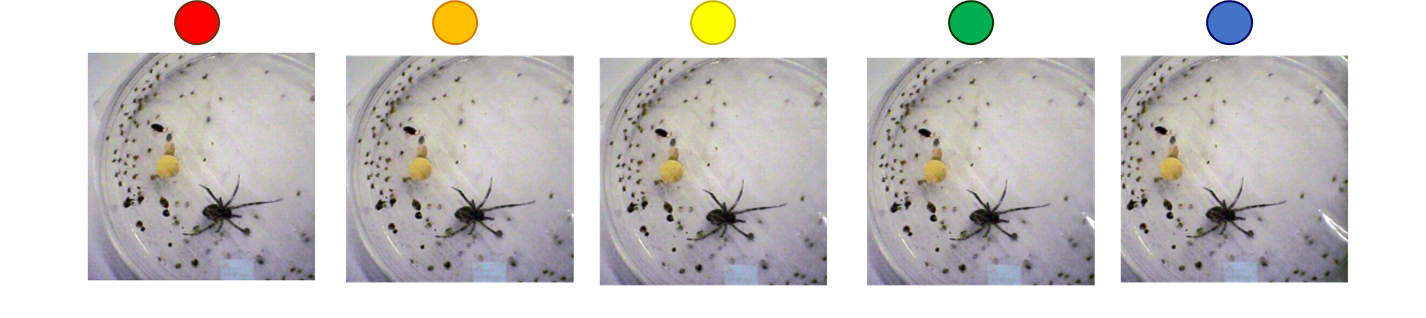}}
    \caption{Five examples of different crops, the colored circles connect the image to it's predicted probability in figure (a)}
    \label{fig:widow_images}
  \end{subfigure}
  \caption{more examples of  the paper's figure 1 showing that DINOv2 suffers from big changes in the output probability for the true label ("black widow") as a function of minor realistic translations. One can also note the clear osculations in the output probability, which occurs as a result of the use of ViT, with non overlapping patches. The translation here is horizontal.}
  \label{fig:app01_1}
\end{figure*}

\begin{figure*}[ht!]
  \centering
  \begin{subfigure}{0.7\linewidth}  
    \fbox{\includegraphics[width=0.95\linewidth]{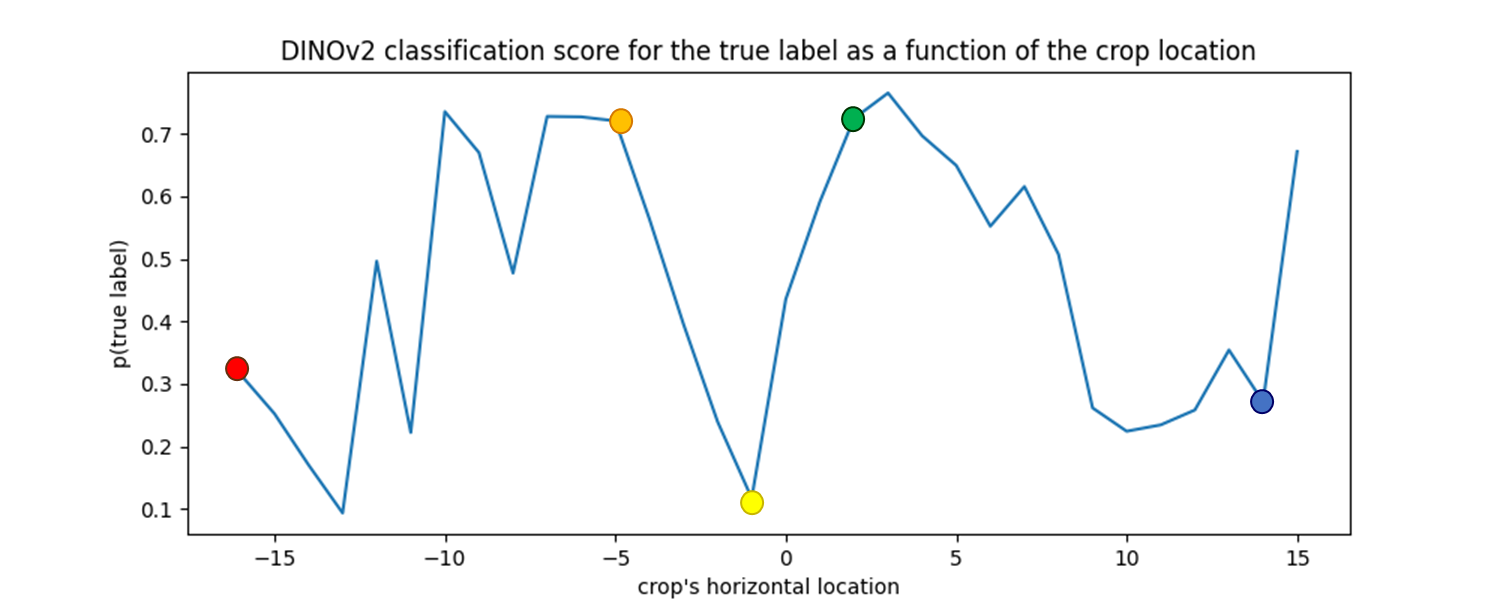}}
    \caption{DINOv2's output score for the true label ("water bottle") as a function of the crop location.}
    \label{fig:bottle_ball}
  \end{subfigure}
  \hfill
  \begin{subfigure}{0.75\linewidth}
    \fbox{\includegraphics[width=0.95\linewidth]{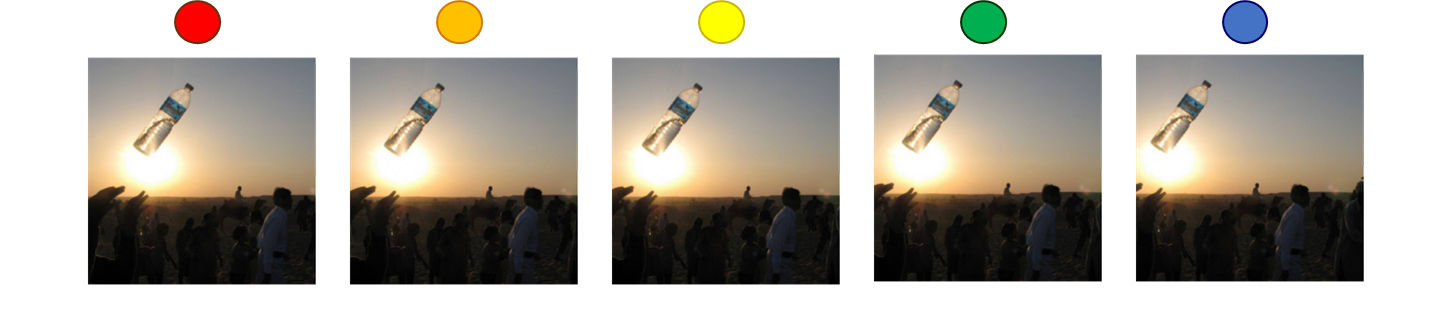}}
    \caption{Five examples of different crops, the colored circles connect the image to it's predicted probability in figure (a)}
    \label{fig:bottle_images}
  \end{subfigure}
  \caption{more examples of  the paper's figure 1 showing that DINOv2 suffers from big changes in the output probability for the true label ("water bottle") as a function of minor realistic translations. One can also note the clear osculations in the output probability, which occurs as a result of the use of ViT, with non overlapping patches. The translation here is horizontal.}
  \label{fig:app01_2}
\end{figure*}


\begin{figure*}[ht!]
  \centering
  \begin{subfigure}{0.7\linewidth}  
    \fbox{\includegraphics[width=0.95\linewidth]{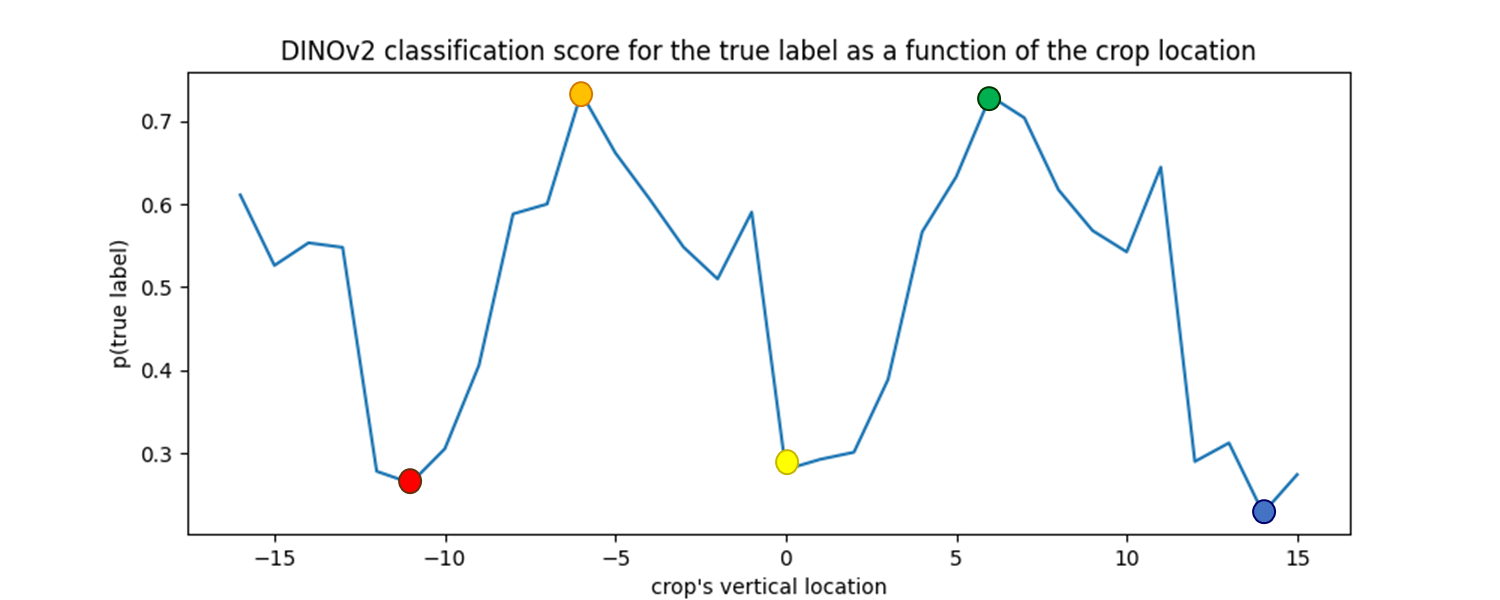}}
    \caption{DINOv2's output score for the true label ("Yorkshire terrier") as a function of the crop location.}
    \label{fig:terrier}
  \end{subfigure}
  \hfill
  \begin{subfigure}{0.75\linewidth}
    \fbox{\includegraphics[width=0.95\linewidth]{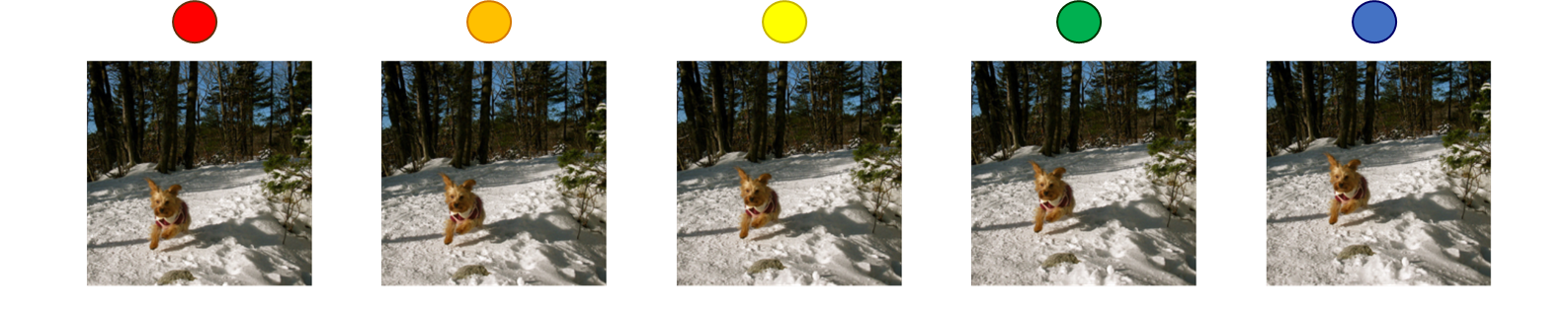}}
    \caption{Five examples of different crops, the colored circles connect the image to it's predicted probability in figure (a)}
    \label{fig:terrier_images}
  \end{subfigure}
  \caption{more examples of  the paper's figure 1 showing that DINOv2 suffers from big changes in the output probability for the true label ("Yorkshire terrier") as a function of minor realistic translations. One can also note the clear osculations in the output probability, which occurs as a result of the use of ViT, with non overlapping patches. The translation here is vertical.}
  \label{fig:app01_3}
\end{figure*}

\newpage
\clearpage
\subsection*{Appendix 2 - More examples and analyses of DINOv2 change in predicted Nearest neighbor }

\begin{figure*}[!h]
  \centering
  \begin{subfigure}{0.49\linewidth}  
    \fbox{\includegraphics[width=1\linewidth]{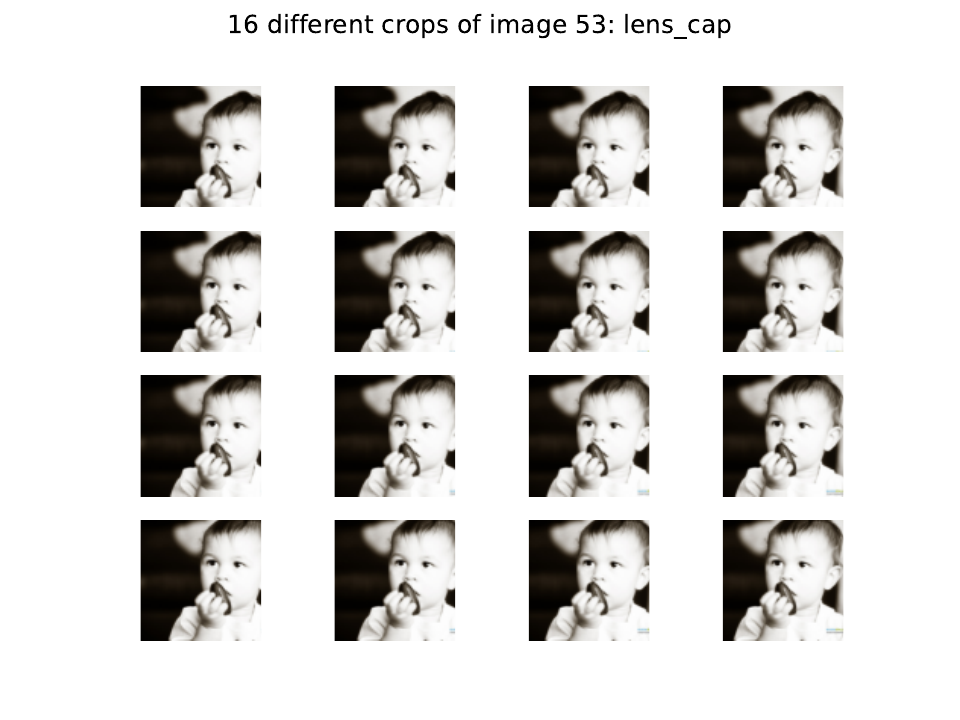}}
    \caption{Sixteen out of the possible $32 \times 32 = 1024$ images of size $224 \times 224$ pixels cropped from a $256 \times 256$ image. The main object (lens cap) is present in all the images. }
    \label{fig:image_53}
  \end{subfigure}
  \hfill
  \begin{subfigure}{0.49\linewidth}  
    \fbox{\includegraphics[width=1\linewidth]{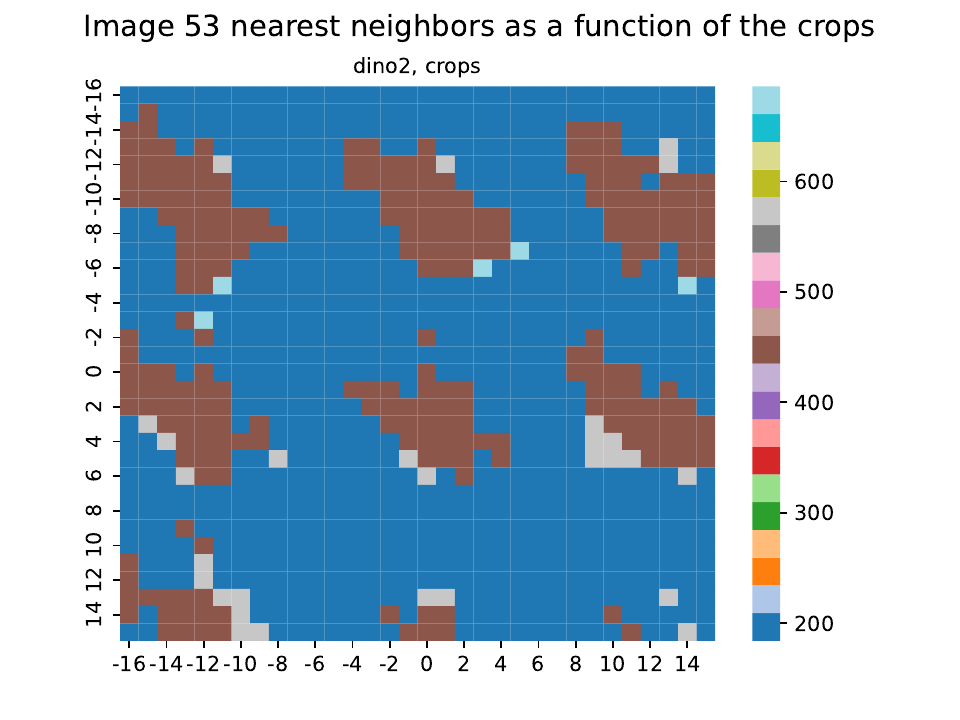}}
    \caption{The index of the image in the training set, which is the nearest neighbor of each of the $32 \times 32$ crops, ordered in a grid representing the location of the crop relative to the center crop.}
    \label{fig:image_53_posterior}
  \end{subfigure}

  \begin{subfigure}{1\linewidth}
    \fbox{\includegraphics[width=0.95\linewidth]{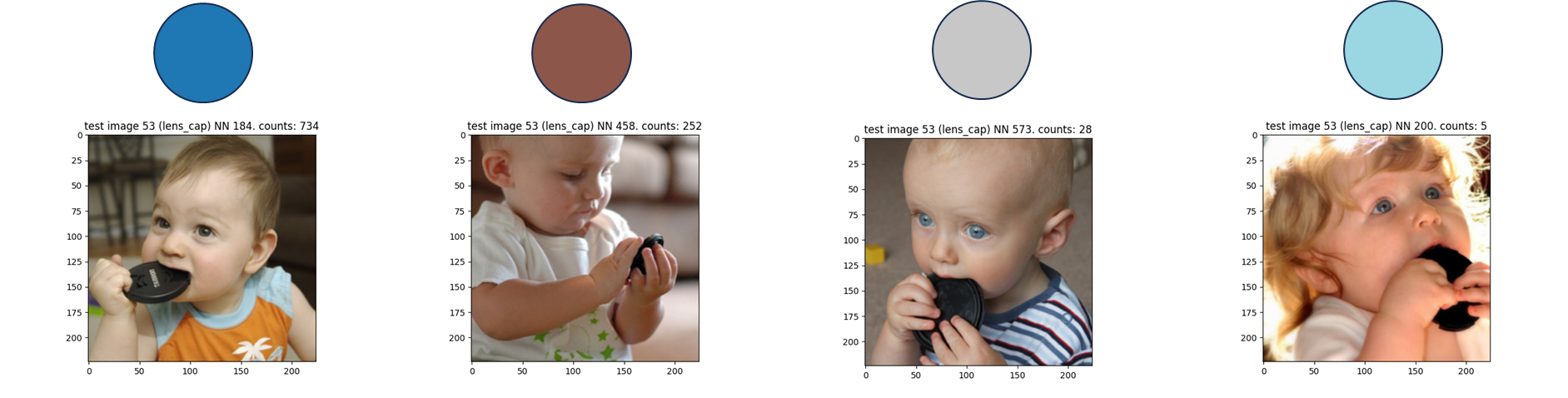}}
    \caption{The four different training images retrieved as the nearest neighbors for some crop of the image. The colored circles represent the corresponding color in Figure \ref{fig:image_53_posterior}.}
    \label{fig:NNs53}
  \end{subfigure}

  \begin{subfigure}{1\linewidth}
    \fbox{\includegraphics[width=0.95\linewidth]{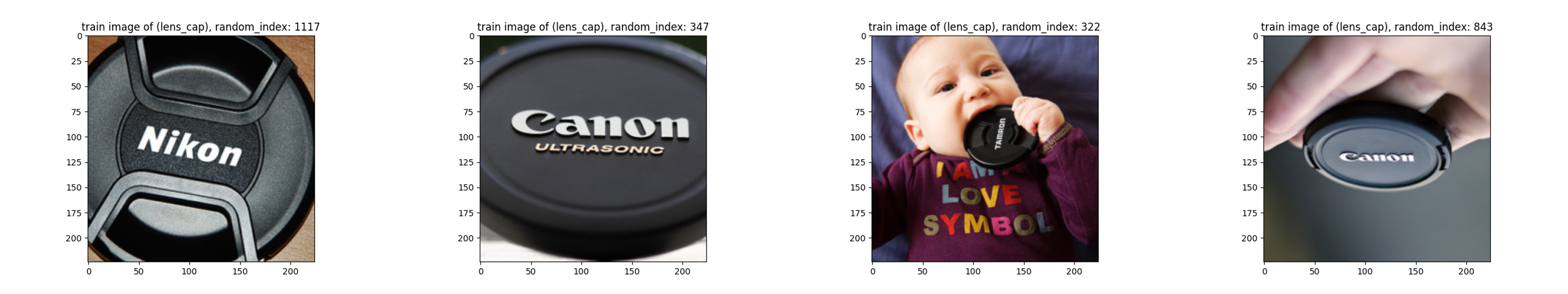}}
    \caption{Four random images of a lens cap as a reference.}
    \label{fig:rand53}
  \end{subfigure}
  
  \caption{For every test image, we calculate DINOv2's representation for each of the $32 \times 32 = 1024$ different crops. While (a) shows a monotonous continuous translation of the cropped image in both dimensions, (b) shows that the nearest neighbor alternates between several different images as shown in (c). (d) serves as a reference, illustrating that while DINOv2's representation isn't robust, it is meaningful in the sense that the nearest neighbors are similar to the input image (a baby with the lens cap).}
  \label{fig:demo}
\end{figure*}

\begin{figure*}
  \centering
  \begin{subfigure}{0.49\linewidth}  
    \fbox{\includegraphics[width=1\linewidth]{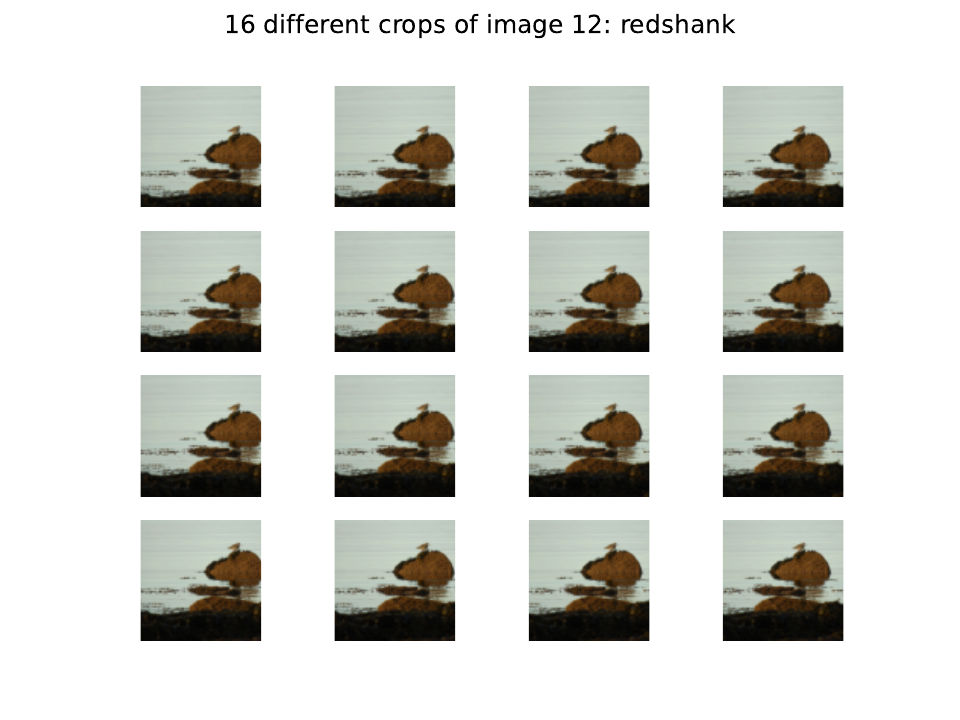}}
    \caption{Sixteen out of the possible $32 \times 32 = 1024$ images of size $224 \times 224$ pixels cropped from a $256 \times 256$ image. The main object (a redshank bird) is present in all the images. }
    \label{fig:image_12}
  \end{subfigure}
  \hfill
  \begin{subfigure}{0.49\linewidth}  
    \fbox{\includegraphics[width=1\linewidth]{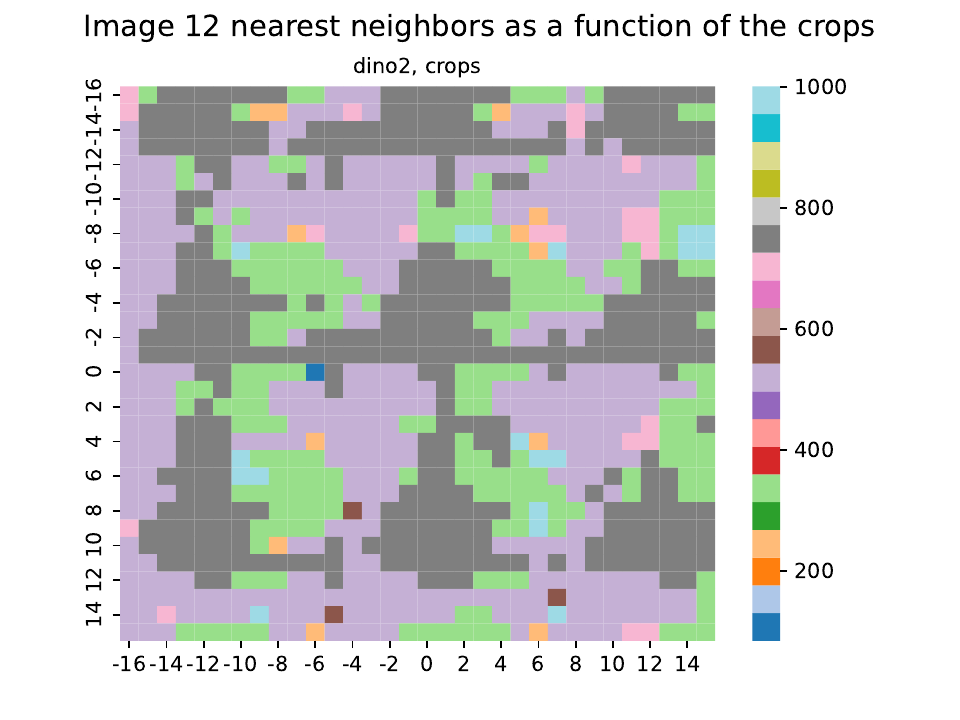}}
    \caption{The index of the image in the training set, which is the nearest neighbor of each of the $32 \times 32$ crops, ordered in a grid representing the location of the crop relative to the center crop.}
    \label{fig:image_12_NNs_map}
  \end{subfigure}

  \begin{subfigure}{1\linewidth}
    \fbox{\includegraphics[width=0.95\linewidth]{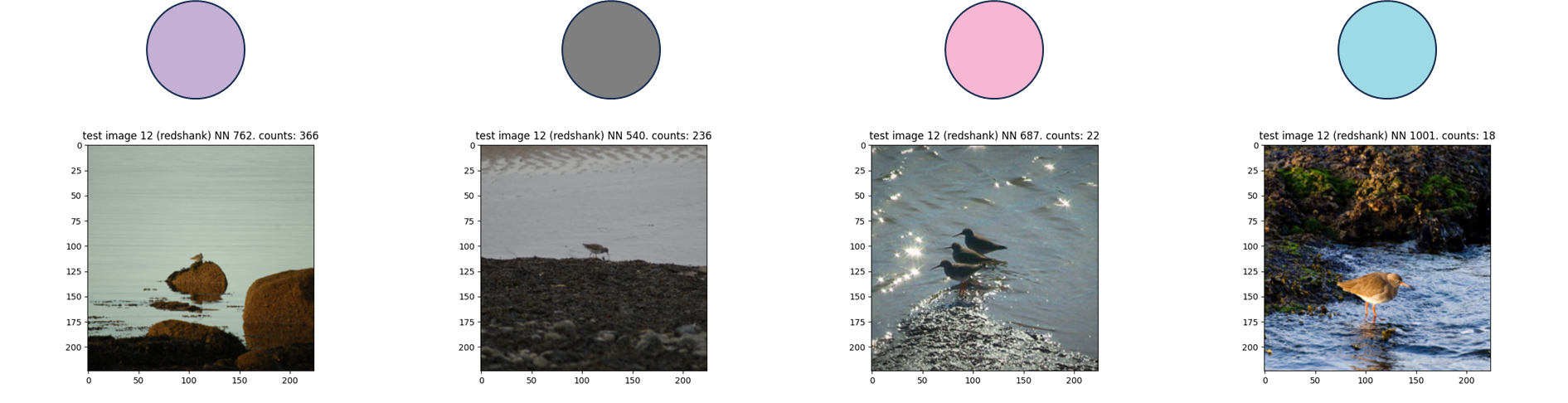}}
    \caption{The four different training images retrieved as the nearest neighbors for some crop of the image. The colored circles represent the corresponding color in Figure \ref{fig:image_12_NNs_map}.}
    \label{fig:NNs12}
  \end{subfigure}

  \begin{subfigure}{1\linewidth}
    \fbox{\includegraphics[width=0.95\linewidth]{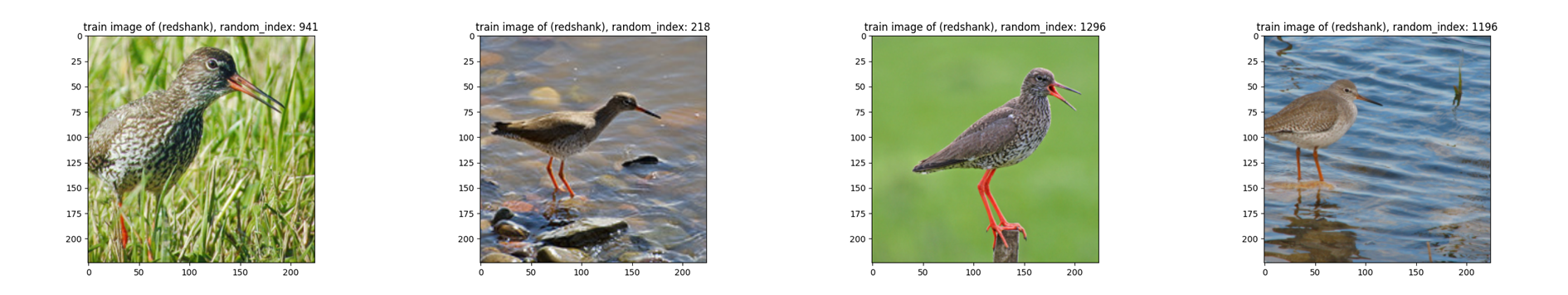}}
    \caption{Four random images of a redshank as a reference.}
    \label{fig:rand12}
  \end{subfigure}
  
  \caption{For every test image, we calculate DINOv2's representation for each of the $32 \times 32 = 1024$ different crops. While (a) shows a monotonous continuous translation of the cropped image in both dimensions, (b) shows that the nearest neighbor alternates between several different images as shown in (c). (d) serves as a reference, illustrating that while DINOv2's representation isn't robust, it is meaningful in the sense that the nearest neighbors are similar to the input image (a redshank bird).}
  \label{fig:app1}
\end{figure*}
\clearpage
\newpage
\subsection*{Appendix 3 - More examples of DINOv2 ond Open-Clip change in predicted nearest neighbor}
\begin{figure*}[ht!]
  \begin{subfigure}{0.45\linewidth}  
    \fbox{\includegraphics[width=1\linewidth]{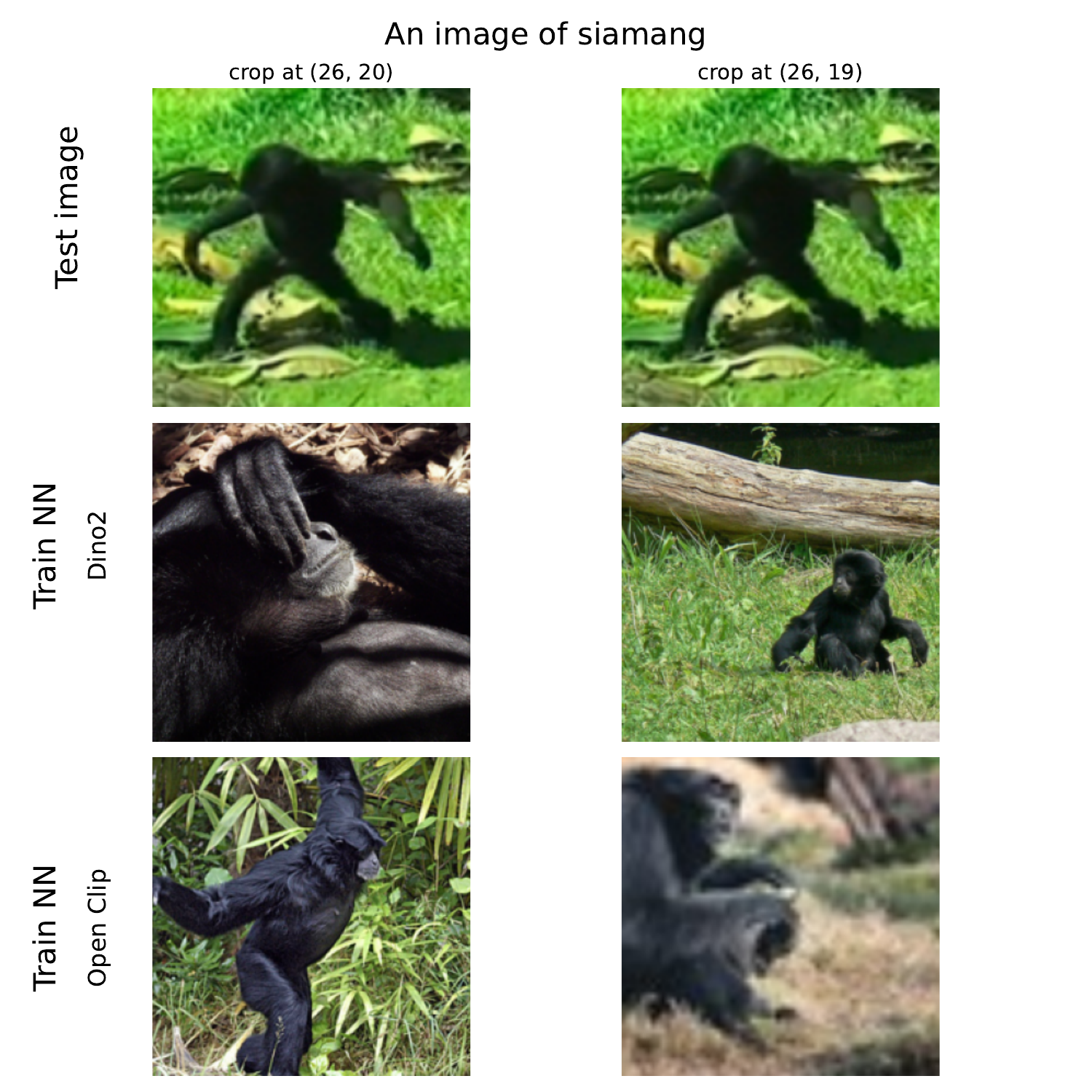}}
    \caption{}
    \label{fig:image_35}
  \end{subfigure}
    \begin{subfigure}{0.45\linewidth}  
    \fbox{\includegraphics[width=1\linewidth]{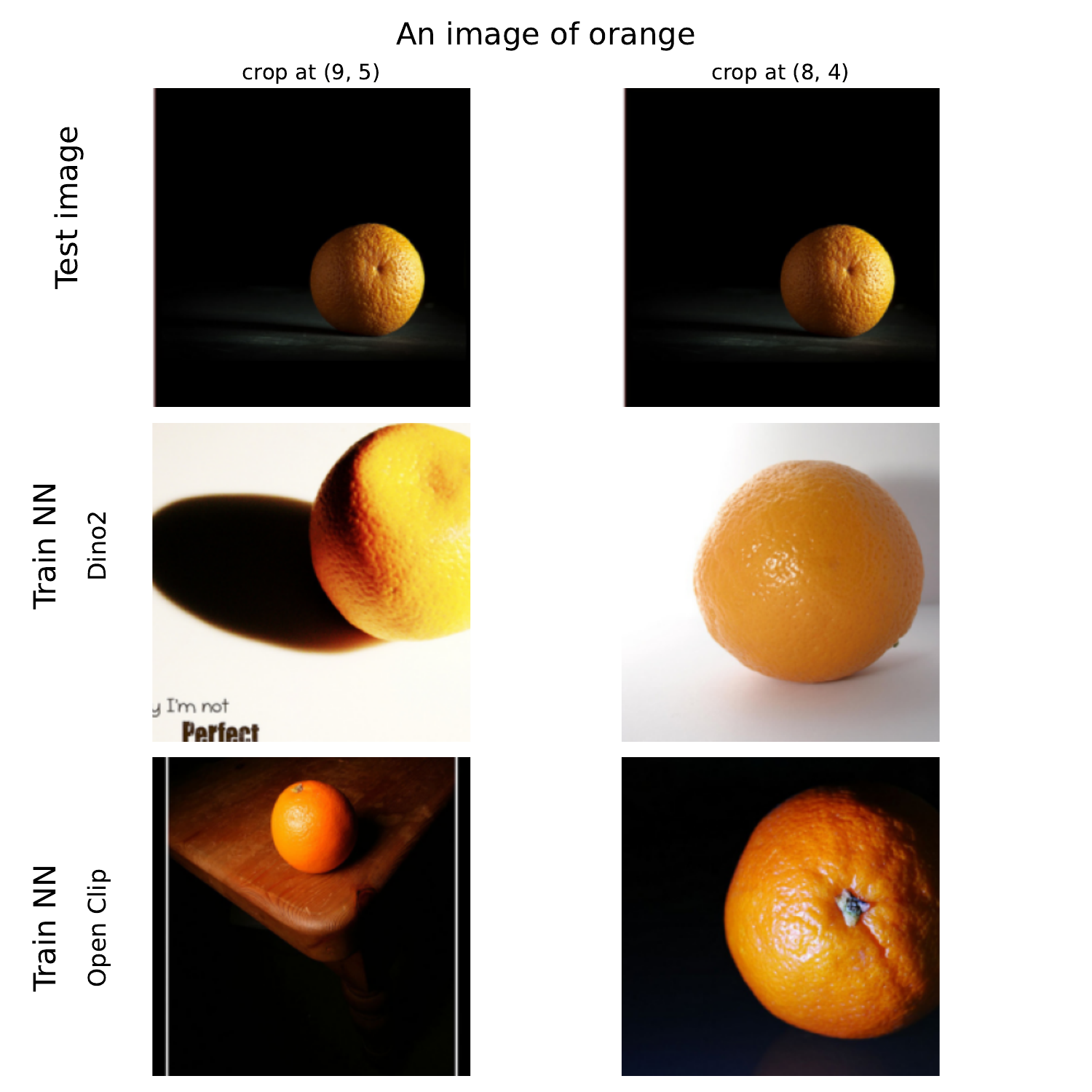}}
    \caption{}
    \label{fig:image_36}
  \end{subfigure}
  \hfill
  \begin{subfigure}{0.45\linewidth}  
    \fbox{\includegraphics[width=1\linewidth]{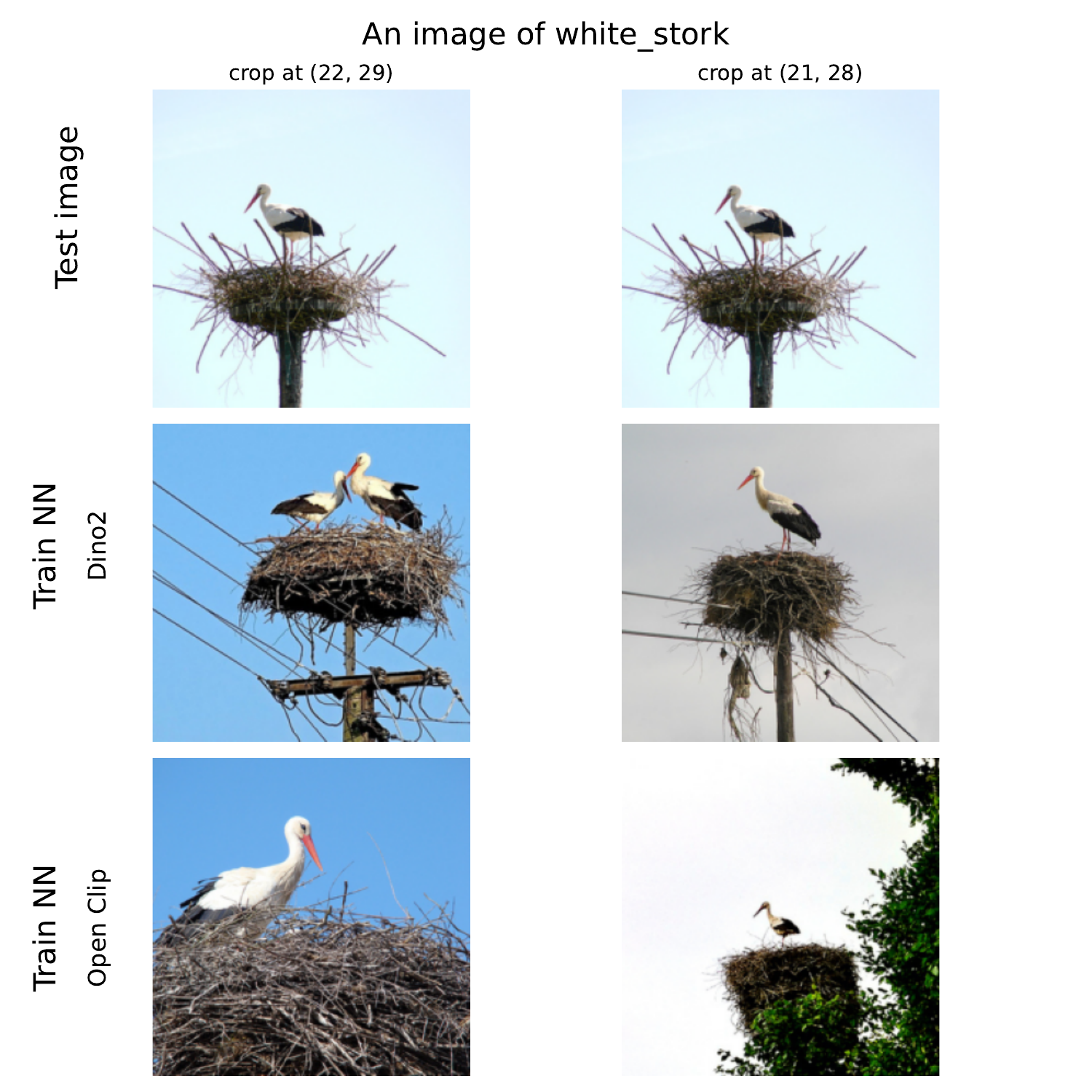}}
    \caption{}
    \label{fig:image_59}
  \end{subfigure}
    \hfill
  \begin{subfigure}{0.45\linewidth}  
    \fbox{\includegraphics[width=1\linewidth]{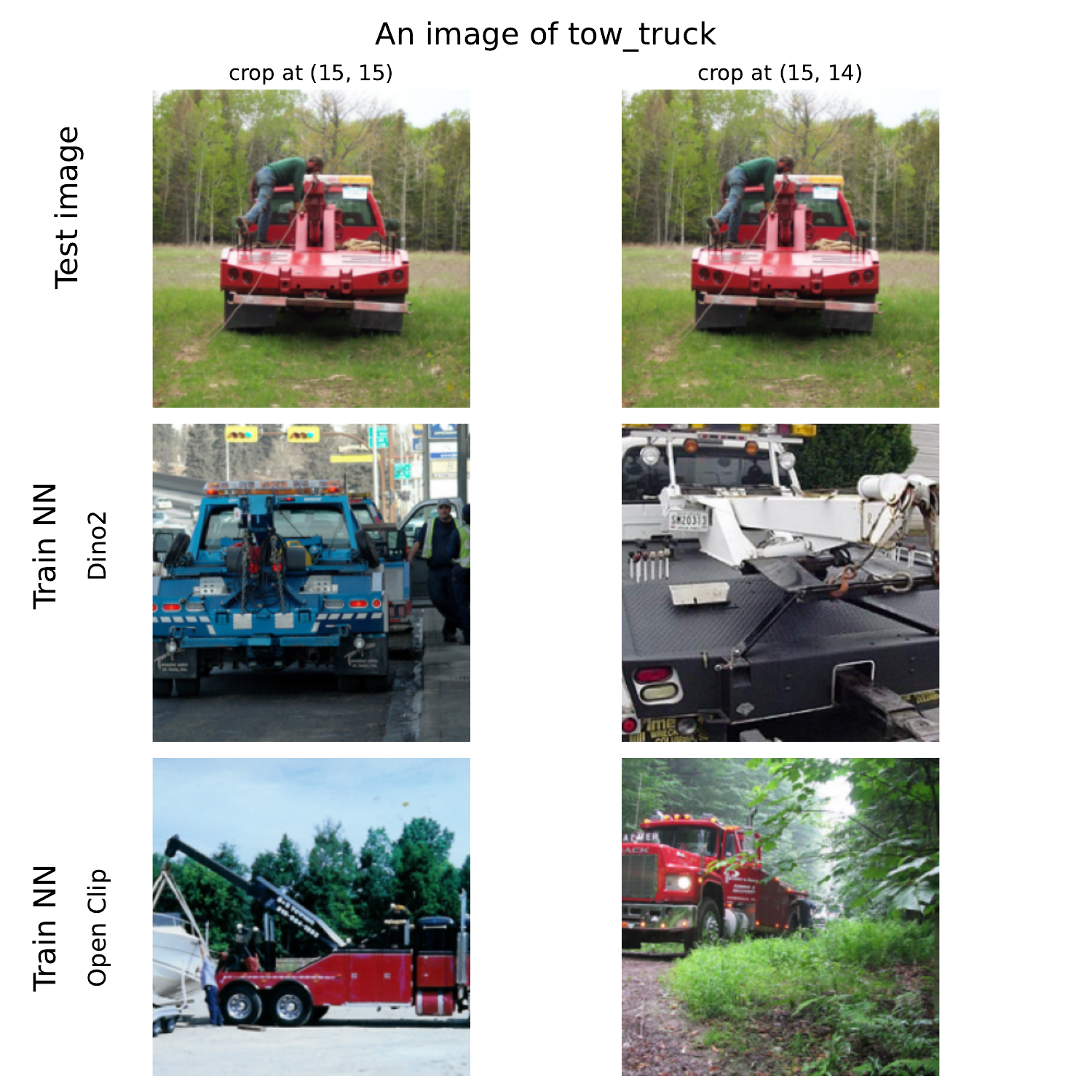}}
        \caption{}
    \label{fig:image_96}
  \end{subfigure}
      \caption{More examples of the paper's figure 2 - a change in the nearest neighbor of both DINOv2 and Open-Clip as a function of a single pixel translation. The main object is clearly seen in all of the test images}

\end{figure*}
\clearpage
\newpage

\subsection*{Appendix 4 - More examples of AFC change in predicted nearest neighbor with non-cyclic translations}

\begin{figure*}
  \centering
  \begin{subfigure}{0.75\linewidth}
    \fbox{\includegraphics[width=0.65\linewidth]{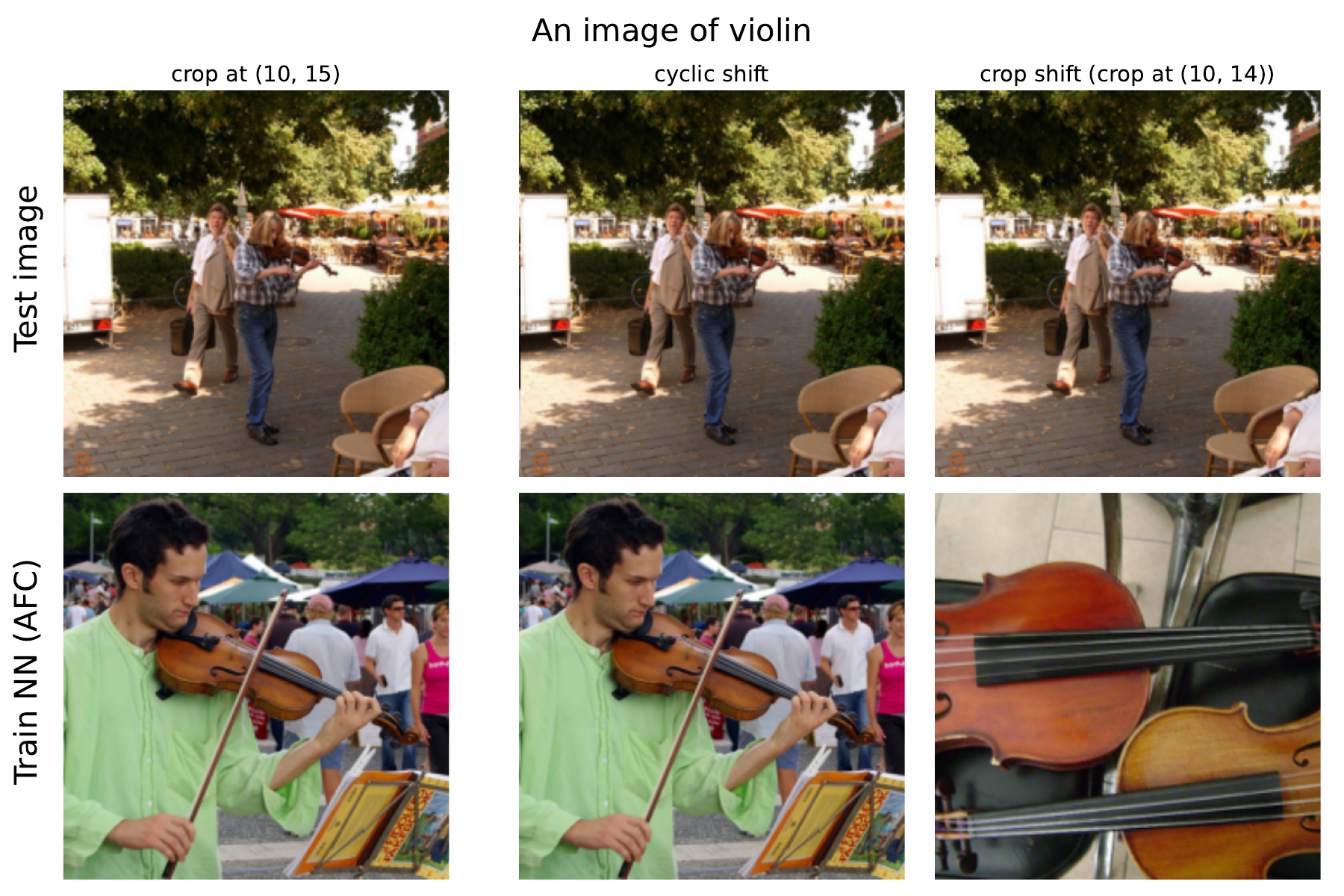}}
    \caption{}
    \label{fig:image_1}
  \end{subfigure}
  \hfill
  \begin{subfigure}{0.75\linewidth}  
    \fbox{\includegraphics[width=0.7\linewidth]{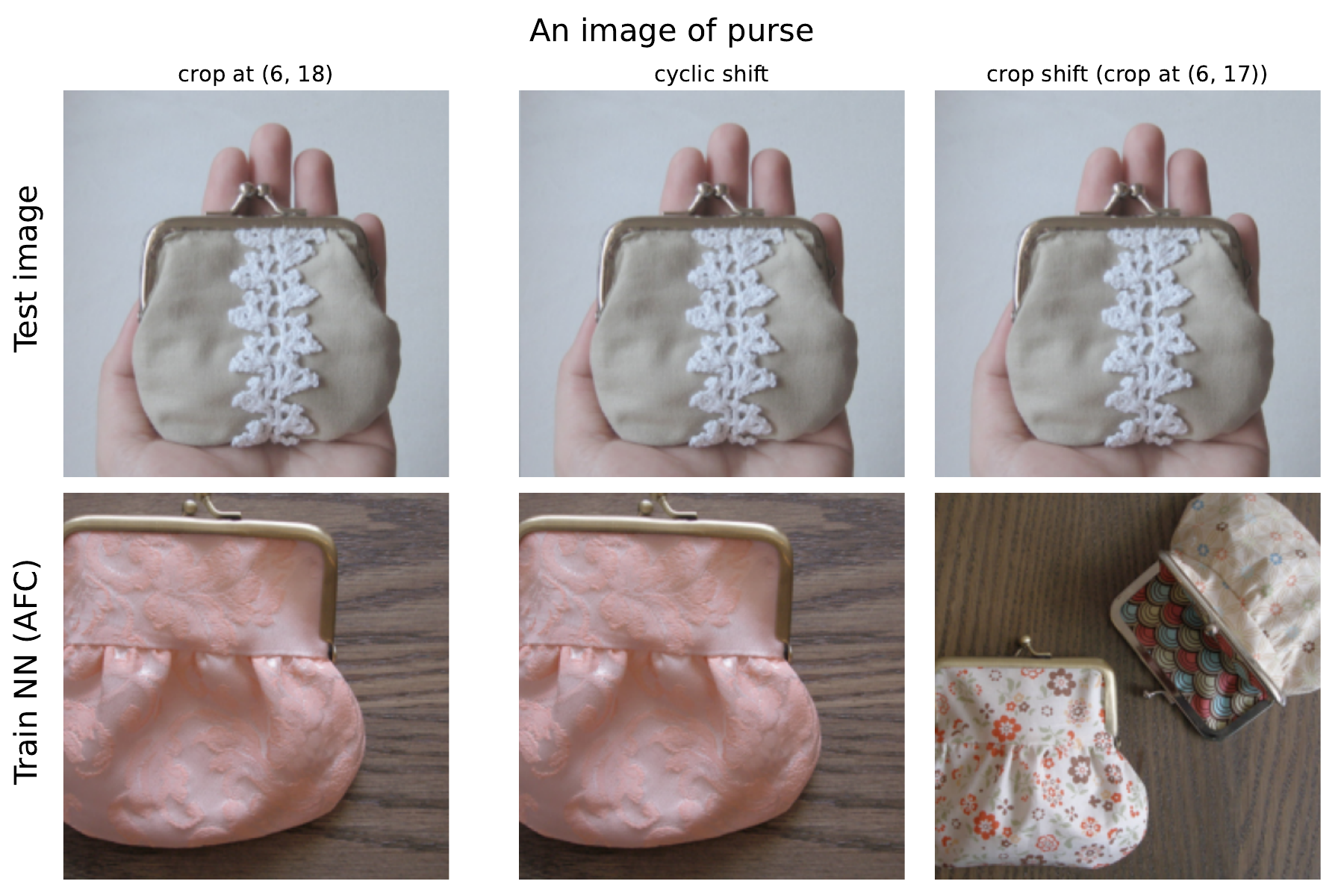}}
    \caption{}
    \label{fig:image_46}
  \end{subfigure}
    \begin{subfigure}{0.75\linewidth}  
    \fbox{\includegraphics[width=0.7\linewidth]{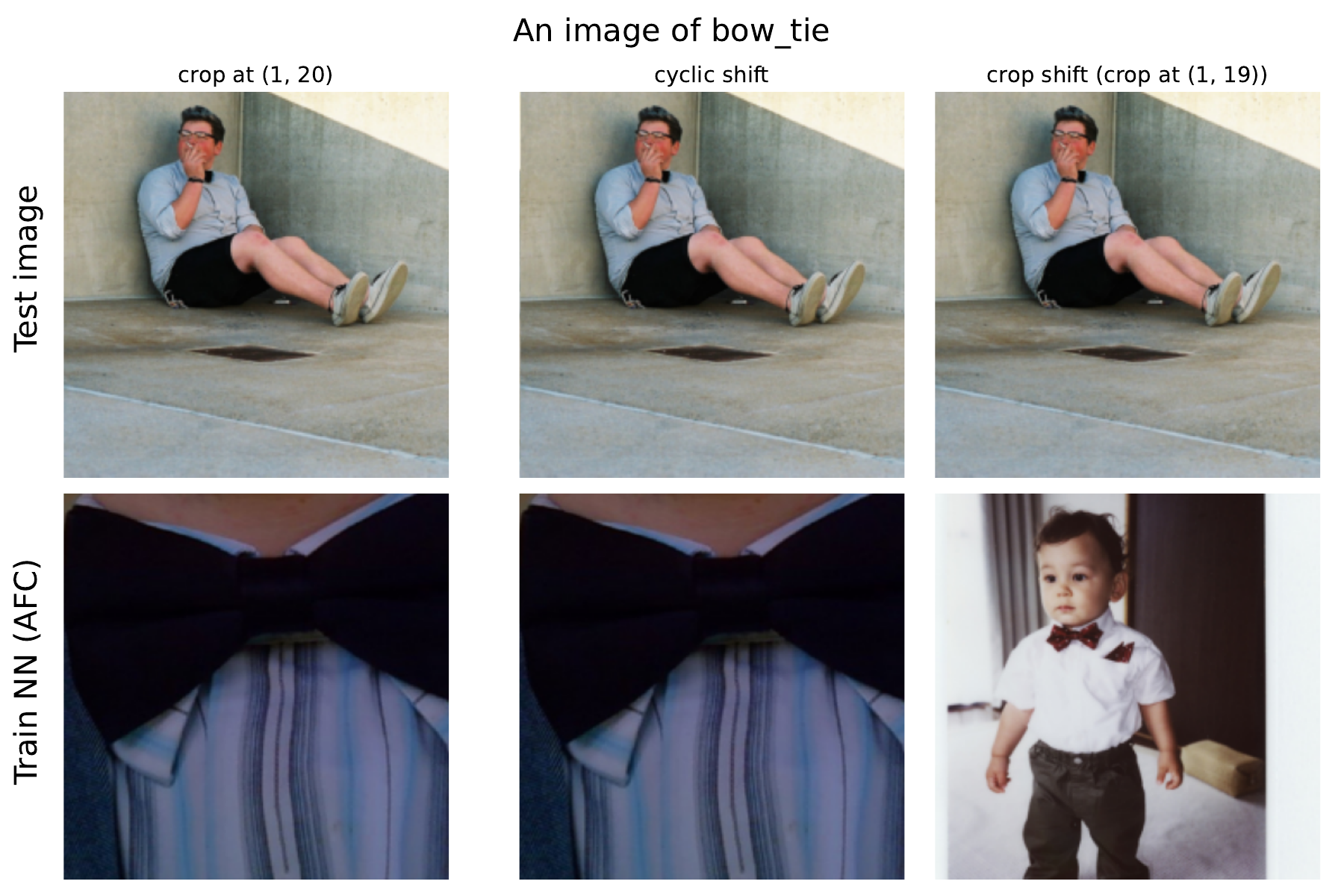}}
    \caption{}
    \label{fig:image_88}
  \end{subfigure}

\end{figure*}
\clearpage
\newpage
\subsection*{Appendix 5 - ImageNet-A Adversarial Robustness}

\begin{figure*}
  \centering
    \fbox{\includegraphics[width=0.8\linewidth]{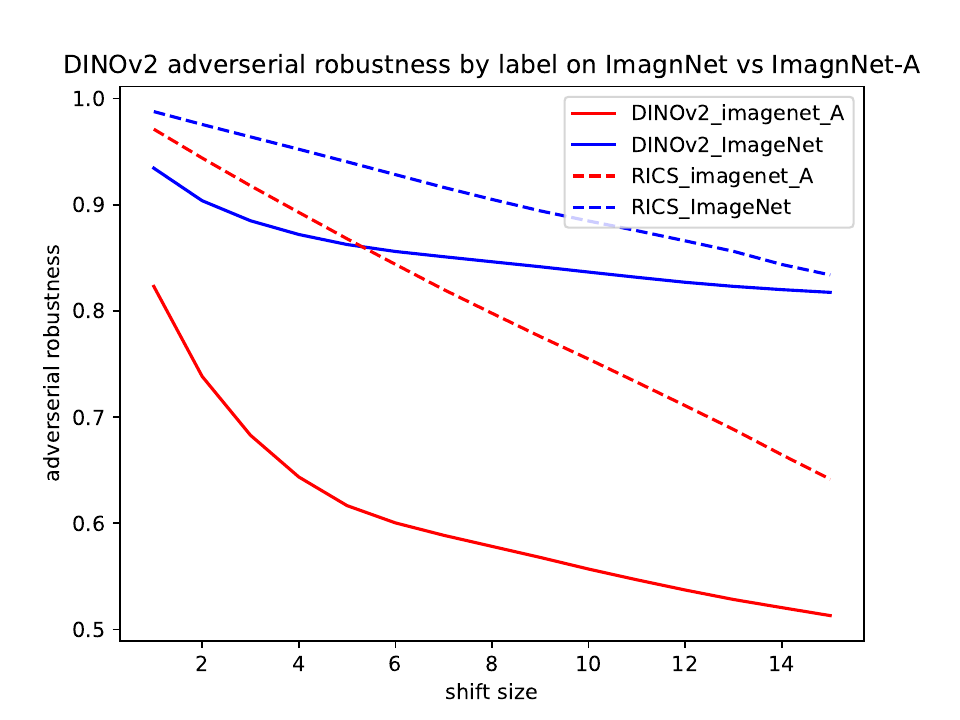}}
    \caption{When evaluated by predicted label, the adversarial robustness is somewhat correlated the the accuracy, therefor, when evaluating on the hard dataset of ImageNet-A (Hendrycks et al.) we can wee DINO is even less robust. RICS improves the robustness, but it does decline as the translations becomes bigger as the theory suggests.}
    \label{fig:app_05}

\end{figure*}
\clearpage
\newpage 
\subsection*{Appendix 6 - Robust Inferance by Crop Selection with DINOv2 results on the PLACES365 Dataset}

As stated in the paper, our method is proven to convert any classifier to a robust classifier. We have shown it's true for different models, and here we show empirically it's true for different dataset as well. We evaluate DINOv2 \cite{oquab2023dinov2} with and without RICS on the PLACES365 \cite{zhou2017places} dataset which is more diverse and challenging task than ImageNet \cite{krizhevsky2012imagenet}.

\begin{figure}[htb!]
  \centering
  \begin{subfigure}{0.5\linewidth}
    \centering
    \includegraphics[width=\linewidth]{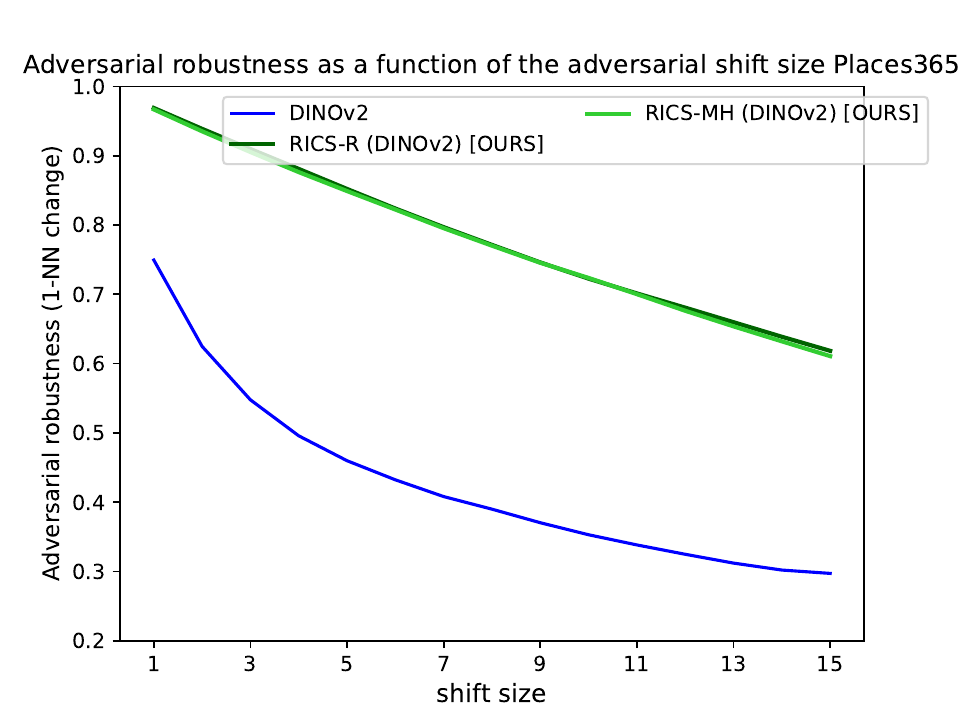}
    \label{subfig:app_6_1}
  \end{subfigure}\hfill
  \begin{subfigure}{0.5\linewidth}
    \centering
    \includegraphics[width=\linewidth]{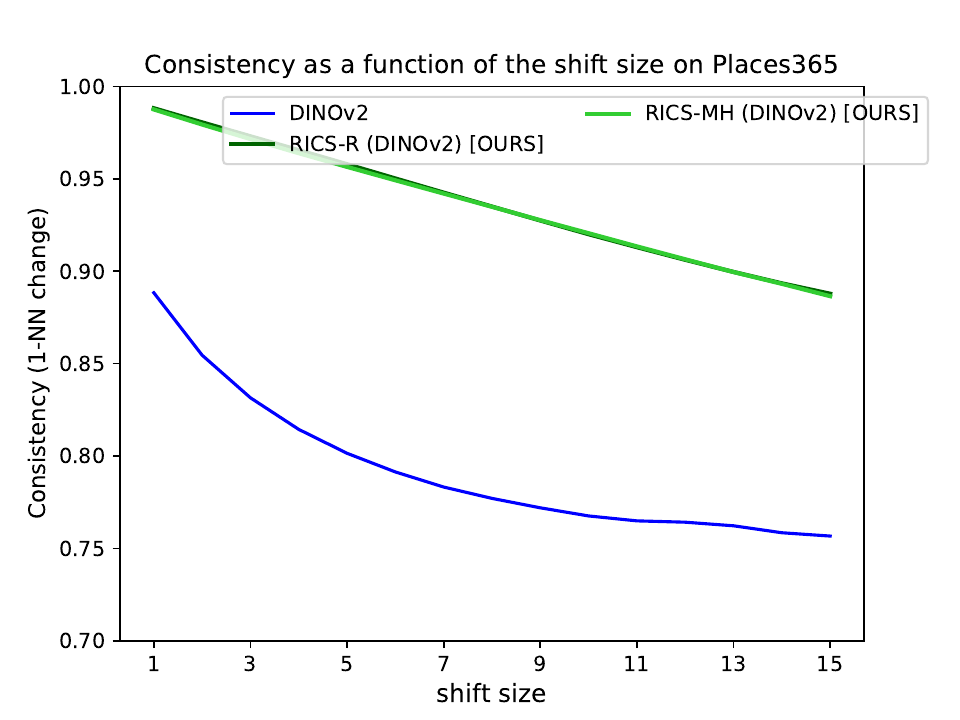}
    \label{subfig:acc_rob_rebut}
  \end{subfigure}
  \caption{Adversarial robustness (left) and Consistency (right) of the predicted nearest neighbor on the Places365 dataset. Using RICS improves the robustness dramatically in both metrics, for any shift size.}
  \label{fig:app_6_2}
\end{figure}

\begin{figure}[htb!]
  \centering
    \includegraphics[width=0.75\linewidth]{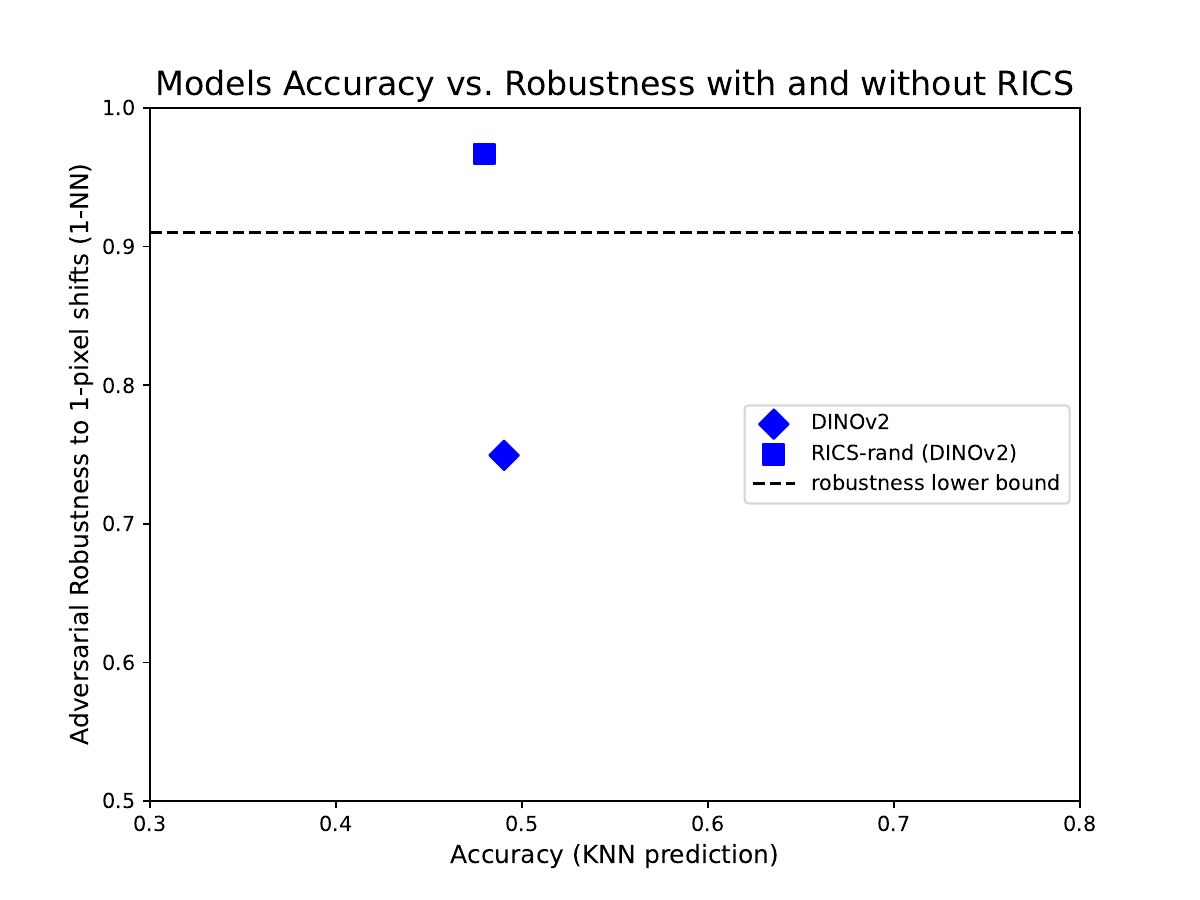}
    \caption{Adversarial robustness of the predicted nearest neighbor for 1 pixel shift and model accuracy with KNN (k=11) on the Places365 dataset. Using RICS-rand improves the robustness dramatically with a 1\% drop of accuracy only.}
    \label{subfig:app_6_3}
\end{figure}

\begin{table}
  \centering
  \begin{tabular}{@{}l|c|cccc|cccc|@{}}
    \toprule
    \multicolumn{1}{c}{Model} &  \multicolumn{1}{c}{Accuracy} &  \multicolumn{4}{c}{\makecell[c]{Adv-Rob (1-NN)\\{\footnotesize Shift Size:}}} & \multicolumn{4}{c}{\makecell[c]{Consistency (1-NN)\\{\footnotesize Shift Size:}}} \\
     \cmidrule(lr){3-6} \cmidrule(lr){7-10} 
     & KNN & 1 & 3 & 5 & 9 & 1 & 3 & 5 & 9  \\
    \midrule
DINOv2 & \textbf{49.03} & 74.94 & 54.77 & 45.98 & 37.04 & 88.83 & 83.15 & 80.15 & 77.20 \\
\midrule
RICS-MH (DINOv2) & 45.61 & \textbf{96.89} & \textbf{90.96} & \textbf{85.20} & \textbf{74.60} & \textbf{98.82} & \textbf{97.30} & \textbf{95.79} & \textbf{92.74} \\
RICS-rand (DINOv2) & \textbf{47.97} & \textbf{96.68} & \textbf{90.53} & \textbf{84.90} & \textbf{74.55} & \textbf{98.75} & \textbf{97.15} & \textbf{95.65} & \textbf{92.76} \\
    \bottomrule
  \end{tabular}
  \caption{RICS is able to make DINOv2 much more robust, with a cost of 1\% - 3.5\% in accuracy. Results are calculated based on a random sample of 1000 images from the PLACES365 validation set.}
  \label{tab:app_results}
\end{table}

\end{document}